\documentclass[letterpaper]{article}
\usepackage{arxiv}
\usepackage{times}
\usepackage{helvet}
\usepackage{courier}
\usepackage[hyphens]{url}
\usepackage{graphicx}
\usepackage{natbib}
\usepackage{amsmath,amssymb,amsthm,mathtools}
\usepackage{booktabs}
\usepackage{multirow}
\usepackage{tikz}
\usepackage{pgfplots}
\pgfplotsset{compat=1.17}
\usepackage{algorithm}
\usepackage{algpseudocode}
\usetikzlibrary{arrows.meta,positioning}
\newtheorem{theorem}{Theorem}
\newtheorem{lemma}[theorem]{Lemma}
\newtheorem{proposition}[theorem]{Proposition}
\newtheorem{corollary}[theorem]{Corollary}
\newtheorem{assumption}{Assumption}

\newtheorem{remark}{Remark}

\newcommand{\E}{\mathbb{E}}
\newcommand{\Scal}{\mathcal{S}}
\newcommand{\Acal}{\mathcal{A}}
\newcommand{\Ccal}{\mathcal{C}}
\newcommand{\dann}{\hat d}
\newcommand{\dup}{\hat d^{+}}
\newcommand{\dinit}{d_0}
\newcommand{\eop}{\varepsilon_{\mathrm{op}}}
\newcommand{\Dg}{D_\gamma}
\newcommand{\rhoup}{\rho^{+}}
\newcommand{\norm}[1]{\left\lVert#1\right\rVert}

\title{Sub-Quadratic Bisimulation Metrics via Approximate Nearest Neighbors:\\
Coverage-Augmented Guarantees and Computable Two-Sided Certificates}
\author{
    Ibne Farabi Shihab\textsuperscript{\rm 1}\equalcontrib\thanks{Corresponding author.},
    Joyanta Jyoti Mondal\textsuperscript{\rm 2}\equalcontrib
}
\affiliations{
\textsuperscript{\rm 1}Department of Computer Science, Iowa State University, USA\\
    \textsuperscript{\rm 2}Department of Computer and Information Sciences, University of Delaware, USA\\

    ishihab@iastate.edu, joyanta@udel.edu 
}

\begin{document}
\maketitle

\begin{abstract}
Bisimulation metrics quantify behavioral similarity in Markov decision processes, but their
Wasserstein fixed-point operator updates every state pair and incurs quadratic pairwise work. We
give a certificate-carrying sub-quadratic method for MDPs with bounded transition support and a
useful low-dimensional indexing representation: an approximate-nearest-neighbor index selects the
pairs updated by the exact restricted operator, while monotone lower and upper runs enclose the
exact metric at every sweep. The main analytical result is a coverage-augmented anytime bound:
local index quality alone cannot control global error, because uncovered pairs retain their
initialization gap. The limiting error is at most $\max(\rho,\eop/(1-\gamma))$, and with exact
covered backups the lower arm satisfies $\|\dann-d\|_\infty=\rho$. Because $\rho$ depends on the
unknown exact metric, the algorithm returns the observable sandwich width instead; agreement of
the induced lower and upper clusterings certifies exact recovery of the covered aggregation. A
reward-oblivious lower bound shows sub-quadratic index-first coverage cannot remove the coverage
term, while a separate adaptive lower bound requires $\Omega(|\Scal|)$ pair evaluations.
Exact-operator experiments verify the identity and enclosure in every seeded run, and timing
experiments recover quadratic versus sub-quadratic scaling under both cheap and full Wasserstein
backups. On the grouped $|\Scal|=64$ benchmark, exact restricted refinement reaches the
exact-metric skyline once retrieval covers roughly half of all pairs, while independently trained
MICo and DBC baselines stay $22$--$33\times$ above that skyline at every retrieval budget. Taxi
shows the certificate abstaining under an uninformative embedding, while a $2500$-state gridworld
improves over a reward-only metric by $28.6\%$ using $12.8\%$ of one quadratic sweep.
\end{abstract}

\section{Introduction}
A reinforcement learning agent that treats two behaviorally identical states as identical can
generalize across them and learn from fewer samples; the bisimulation metric is the formal device
licensing this. Introduced for finite MDPs by \citet{ferns2004metrics} and extended to continuous
and infinite spaces by \citet{ferns2011bisimulation}, the metric $d$ is small exactly when two
states are behaviorally interchangeable: the fixed point of an operator coupling immediate-reward
difference with Wasserstein distance between next-state distributions. States at distance zero are
bisimilar in the sense of \citet{larsen1989bisimulation,givan2003equivalence}; the metric degrades
gracefully between equivalence classes, useful for state abstraction
\citep{li2006abstraction,ravindran2004algebraic} and representation learning
\citep{zhang2021dbc,gelada2019deepmdp,castro2021mico}.

The price is quadratic: each application visits $\Theta(|\Scal|^2)$ pairs and solves an
optimal-transport problem per pair, prohibitive at the Atari-scale state spaces motivating deep
representation learning \citep{bellemare2013ale,mnih2015dqn}. The standard response abandons the
exact metric for a sampled surrogate folded into a learned embedding
\citep{castro2020scalable,castro2021mico,zhang2021dbc}, trading guarantees for scalability. We ask
instead whether the metric can be computed sub-quadratically, keeping a provable, \emph{checkable}
relation to the exact object.

The opportunity arises in MDPs whose bisimulation geometry admits a low-dimensional Euclidean
representation: metric-embedding theory and landmark MDS
\citep{bourgain1985embedding,linial1995geometry,cox2008mds} give a practical index, and given
$\norm{\phi(s)-\phi(s')}\approx d(s,s')$, approximate-nearest-neighbor search retrieves candidates
sub-linearly \citep{indyk1998ann,andoni2008nearoptimal,malkov2020hnsw}, restricting the sweep to
$O(|\Scal|k')$ pairs (Figure~\ref{fig:overview}). This isn't assumed universally: a poor embedding
still leaves the certificate valid, exposing rather than hiding failure.
Theorem~\ref{thm:complexity} makes the complexity precise; the harder question is what error sparse
coverage necessarily leaves.

\begin{figure*}[t]
  \centering
  \begin{tikzpicture}[>=Stealth, font=\footnotesize,
    box/.style={draw, rounded corners, align=center, minimum height=7mm,
      minimum width=15mm, inner sep=1.5pt, fill=blue!4},
    res/.style={draw, rounded corners, align=center, minimum height=6mm,
      inner sep=2pt, fill=red!5}]
  \node[box] (mdp) {MDP\\[0.5pt] $(\Scal,\Acal,P,R,\gamma)$};
  \node[box, right=4mm of mdp] (embed) {embed $\phi$\\[0.5pt] distortion $\eta$};
  \node[box, right=4mm of embed] (ann) {ANN index\\[0.5pt] $N(s)$, recall $r$};
  \node[box, right=4mm of ann] (backup) {two restricted runs\\[0.5pt] $f_t^{-}\!\uparrow$;\ $f_t^{+}\!\downarrow$};
  \node[box, right=4mm of backup] (fix) {sandwich\\[0.5pt] $f_t^{-}\le d\le f_t^{+}$};
  \draw[->] (mdp)--(embed);
  \draw[->] (embed)--(ann);
  \draw[->] (ann)--(backup);
  \draw[->] (backup)--(fix);
  \node[res, below=4mm of backup] (r)
    {uncovered pairs frozen: gap
     $\rho=\max\limits_{(s,s')\notin\Ccal}\big(d(s,s')-\dinit(s,s')\big)$;\
     lower arm $\norm{\dann-d}_\infty=\rho$ (Cor.~\ref{cor:identity})};
  \draw[->, dashed] (r.north) -- (backup.south);
  \end{tikzpicture}
  \caption{\textsc{Certified sub-quadratic bisimulation via ANN}. An ANN index over embedding $\phi$
    fixes covered pairs $\Ccal$; two monotone restricted iterations enclose $d$ at every sweep
    (Cor.~\ref{cor:sandwich}). Uncovered pairs freeze at gap $\rho$, which the lower arm's global
    error \emph{equals} (Cor.~\ref{cor:identity}).}
  \label{fig:overview}
  \end{figure*}

The difficulty is the error. An ANN sweep seems to cost only two local, per-step quantities, the
embedding distortion $\eta$ and recall miss $r$, amplified by $1/(1-\gamma)$ into a clean bound
$\norm{\dann-d}_\infty\le\varepsilon/(1-\gamma)$ for $\varepsilon=\eta+Lr$. \emph{Every bound of this
form is false}, not because of the constant: a top-$k$ index touches only $O(|\Scal|k)$ of the
$\binom{|\Scal|}{2}$ pairs per iteration, and without re-seeding still misses a $\Theta(|\Scal|^2)$
fraction entirely; those pairs are never updated and incur their full initialization gap as error,
invisible to per-step index quality. Proposition~\ref{prop:rho-lower} makes the refutation
unconditional, constructing two reward-oblivious instances with zero distortion and recall miss,
indistinguishable on every evaluated pair yet differing by a constant on an uncovered one. The
random-MDP grid instead verifies the corrected identity, covered-pair bound, and two-sided
enclosure under an exact Kantorovich backup.

In this paper, we replace the conjecture with a \emph{coverage-augmented} guarantee. Our contributions are as
follows.
\begin{itemize}\setlength{\itemsep}{0pt}\setlength{\parskip}{0pt}
\item We give a certificate-carrying sub-quadratic algorithm: an ANN index over a low-dimensional
embedding selects the pairs, and the exact Kantorovich operator updates those alone
(Algorithm~\ref{alg:pipeline}, Theorem~\ref{thm:complexity}).
\item We prove an anytime error bound (Theorem~\ref{thm:error}): after $t$ sweeps the global error
is at most $\max(\rho,\eop/(1-\gamma))+\gamma^t\Delta_0$, for coverage gap $\rho$, backup
perturbation $\eop$, and initialization gap $\Delta_0$; for the exact backup it sharpens to the
equality $\norm{\dann-d}_\infty=\rho$ (Corollary~\ref{cor:identity}), so the frozen block does not
bound the error, it \emph{is} the error, however good the index.
\item Since $\rho$ is unobservable, we pair that iteration with an over-estimating twin, proving a
computable anytime sandwich $f_t^{-}\le d\le f_t^{+}$ (Corollary~\ref{cor:sandwich}) whose width
bounds per-pair error and whose induced clusterings, on agreement, recover the exact covered
aggregation without computing $d$.
\item We prove matching lower bounds (Proposition~\ref{prop:rho-lower}): a quadratic obstruction
for reward-oblivious index-first selection and $\Omega(|\Scal|)$ evaluations for fully adaptive
selection, leaving the super-linear case open.
\item We verify the analysis without surrogates, using an exact Kantorovich operator, a real MDS
embedding, and a real LSH index, and we measure wall-clock scaling against the exact sweep, compare
against independently trained MICo and DBC, and run the pipeline on Taxi and a $2500$-state
gridworld.
\end{itemize}

\section{Setup and Related Work}\label{sec:setup}
We consider a finite Markov decision process $(\Scal,\Acal,P,R,\gamma)$ with states $\Scal$, actions
$\Acal$, transition kernel $P(\cdot\mid s,a)$, reward $R:\Scal\times\Acal\to[0,L]$, and discount
$\gamma\in[0,1)$ \citep{puterman1994mdp,suttonbarto2018}. Throughout, metrics are symmetric functions
on unordered pairs with zero diagonal, and $\Dg:=L/(1-\gamma)$ denotes the a-priori diameter bound
($d\le\Dg$ pointwise for every metric considered here). The bisimulation metric of
\citet{ferns2004metrics} is the fixed point $d=Td$ of the operator
\begin{equation}
  \begin{split}
  (Tf)(s,s') = \max_{a\in\Acal}\Big[\,&|R(s,a)-R(s',a)| \\
  &+ \gamma\,W_1^{f}\big(P(\cdot\mid s,a),P(\cdot\mid s',a)\big)\Big],
  \end{split}
  \label{eq:bisim}
  \end{equation}
where $W_1^{f}$ is the Kantorovich--Wasserstein-1 distance under ground cost $f$
\citep{villani2009optimal}. The operator is a $\gamma$-contraction in the sup-norm
$\norm{f}_\infty=\max_{s,s'}|f(s,s')|$, monotone in its ground cost (Lemma~\ref{lem:basic}), so $d$
exists, is unique, and is the limit of $T^n0$ \citep{comanici2015convergence}. Each application
evaluates \eqref{eq:bisim} at every one of the $\binom{|\Scal|}{2}$ pairs and solves an
optimal-transport problem per pair-action, the $\Theta(|\Scal|^2)$ cost we attack.

Probabilistic bisimulation originates with \citet{larsen1989bisimulation} and, for labelled Markov
processes, \citet{desharnais1999metrics,desharnais2004metrics}; the quantitative refinement is due
to \citet{ferns2004metrics,ferns2011bisimulation}, extended to infinite spaces
\citep{ferns2005metrics}, with domain-theoretic treatments
\citep{vanbreugel2005domain,vanbreugel2001approximation}. \citet{ferns2014bisimulation} connect the
metric to value functions, \citet{comanici2012basis} to basis-function discovery,
\citet{taylor2009bounding,ravindran2003smdp} to lax and homomorphism variants. As the fixed point
of a contraction \citep{bertsekas1996neuro,puterman1994mdp}, an iterative solver is natural; its
asynchronous convergence theory \citep{bertsekas1996neuro} is what our re-seeded pipeline
(Corollary~\ref{cor:restore}) invokes, and monotone two-sided enclosures, classical in numerical
dynamic programming, are what we adapt. Exact and on-the-fly algorithms
\citep{bacci2013computing,bacci2013onthefly} and complexity results \citep{chen2012complexity}
remain quadratic; model minimization \citep{givan2003equivalence,dean1997model} and approximate
abstraction \citep{li2006abstraction,abel2016near,jiang2015abstraction} inherit this cost when the
metric must be computed rather than assumed.

The scalable line of work sidesteps the cost by learning rather than computing:
\citet{castro2020scalable} gives sampling-based methods for deterministic MDPs, \citet{castro2021mico}
the MICo distance (kernel and continuity analyses: \citet{castro2023kernel,zhang2021metric}), and
\citet{zhang2021dbc,gelada2019deepmdp,kemertas2021robust,agarwal2021contrastive,lan2022generalization}
learn embeddings tracking a bisimulation-like distance. We are complementary: we accelerate the
metric's computation rather than replace it, using an embedding only as an index for ANN queries,
not the final representation.

The acceleration draws on sub-linear near-neighbor search via locality-sensitive hashing
\citep{indyk1998ann,datar2004lsh}, near-optimal and random-hyperplane hashing
\citep{andoni2008nearoptimal,andoni2018approximate,charikar2002simhash}, product quantization
\citep{jegou2011pq,johnson2021faiss}, graph indices \citep{malkov2020hnsw}, and surveys
\citep{wang2014hashing,li2019annsurvey,aumuller2020annbenchmarks}. The bridge from a metric to a
Euclidean index is metric embedding \citep{bourgain1985embedding,linial1995geometry}, realized by
landmark MDS \citep{cox2008mds}; the Wasserstein term is the object of computational optimal
transport \citep{villani2009optimal,peyre2019computational,cuturi2013sinkhorn}. No prior work runs
the bisimulation fixed-point iteration through an ANN index, to our knowledge; the coverage
phenomenon and its computable certificate are our theory-side contribution.

\section{Algorithms: a Certified Pipeline and Its Analyzed Core}\label{sec:algo}
Two objects organize the paper. Algorithm~\ref{alg:pipeline} is the \emph{deliverable}: a re-seeded,
two-armed pipeline maintaining a monotone lower iterate $\dann$ and upper iterate $\dup$ that
enclose the exact metric at every sweep and stop when the observable width meets tolerance.
Algorithm~\ref{alg:sqb} is its \emph{analyzed core}: a single-build restricted iteration whose
guarantees (Theorems~\ref{thm:complexity}--\ref{thm:error}, Corollary~\ref{cor:identity}) compose
across re-seed rounds into the pipeline's certificate
(Corollaries~\ref{cor:sandwich},~\ref{cor:restore}), and isolates the coverage phenomenon in its
pure form.

\begin{algorithm}[t]
\caption{Certified sub-quadratic bisimulation (pipeline)}
\label{alg:pipeline}
\begin{algorithmic}[1]
\State \textbf{input:} MDP; embedding dim $k$; neighbor budget $k'$; sweeps per round $E$;
  uniform exploration count $u\ge1$; tolerance $\mathrm{tol}$ (or threshold $\tau$)
\State $\dann\gets d_R$ (implicit lazy under-estimate); $\dup\gets\Dg$ off-diagonal;
  $\Ccal\gets\emptyset$
\Repeat
  \State $\phi\gets$ landmark-MDS embedding of the current $\dann$\label{line:reembed}
  \State build the ANN index with fresh randomness and query every state
  \State $\Ccal\gets\Ccal\cup\{\text{ANN-retrieved unordered pairs}\}$\label{line:reseed}
  \For{each $s\in\Scal$ and $j=1,\ldots,u$}
    \State draw $v\sim\mathrm{Unif}(\Scal\setminus\{s\})$ and add $\{s,v\}$ to $\Ccal$
  \EndFor
  \For{$E$ sweeps}
    \State exact restricted backup of $\dann$ and $\dup$ on every pair in $\Ccal$
      \Comment{Alg.~\ref{alg:sqb}, line~\ref{line:update}}
  \EndFor
\Until{width meets tolerance, or $\Pi^{+}_\tau=\Pi^{-}_\tau$ (Cor.~\ref{cor:sandwich}(d))}
\State \textbf{return} $\dann,\dup,\Ccal$
\end{algorithmic}
\end{algorithm}

\begin{algorithm}[t]
\caption{Single-build restricted iteration (analyzed core)}
\label{alg:sqb}
\begin{algorithmic}[1]
\State \textbf{input:} MDP; $k$; $k'$; sweeps $K$; initialization $\dinit\le d$
  (default: $d_R$, implicit) or upper freeze $\Dg$
\State $\dann\gets \dinit$ \Comment{lazy; un-updated entries read as $\dinit$}
\State $\phi\gets$ landmark-MDS embedding of $\dinit$ into $\mathbb{R}^k$\label{line:embed}
\State build ANN index over $\{\phi(s)\}$; for each $s$: $N(s)\gets k'$ retrieved neighbors
  \label{line:index}
\State $\Ccal\gets\{(s,s'): s'\in N(s)\ \text{or}\ s\in N(s')\}$
\For{$t=1,\dots,K$}
  \State $g\gets\dann$ \Comment{Jacobi update: freeze the previous sweep}
  \For{each $(s,s')\in\Ccal$}
    \State $\dann(s,s')\gets\max_a\big[\,|R(s,a)-R(s',a)|+\gamma\,W_1^{g}(P(\cdot|s,a),P(\cdot|s',a))\big]$
      \label{line:update}
  \EndFor
\EndFor
\State \textbf{return} $\dann$ and $\Ccal$
\end{algorithmic}
\end{algorithm}

The deployment pipeline mixes geometry-aware ANN retrieval with a small amount of uniform coverage
exploration. With $u=1$, an unordered pair $\{s,s'\}$ is selected in a re-seed round with
probability $p_{|\Scal|}=1-(1-\tfrac{1}{|\Scal|-1})^2\ge\tfrac{1}{|\Scal|-1}$, so by the second
Borel--Cantelli lemma every pair is selected infinitely often almost surely, converting the
eventual-coverage premise of Corollary~\ref{cor:restore} into a design guarantee; the extra
$u|\Scal|$ proposals per round are linear and leave the sub-quadratic complexity unchanged for
fixed $u$.

Both algorithms share three components, the first two run once per build: an initialization
(lower arm at an under-estimate $\dinit\le d$, either $0$ or the reward pseudo-metric
$d_R(s,s'):=\max_a|R(s,a)-R(s',a)|\le d(s,s')$; upper arm at $\Dg$; both \emph{implicit}, computed
on demand in $O(|\Acal|)$ or $O(1)$); an embedding $\phi:\Scal\to\mathbb{R}^k$ by landmark MDS
\citep{cox2008mds} (Corollary~\ref{cor:embed}) with an ANN index over $\{\phi(s)\}$
(random-hyperplane LSH \citep{charikar2002simhash,indyk1998ann}, equally a product-quantization or
graph index \citep{jegou2011pq,malkov2020hnsw}), one query per state fixing $N(s)$ and hence
$\Ccal$; and the restricted iteration itself, updating only covered pairs by the exact backup
\eqref{eq:bisim} under the current iterate.

Index quality is two measured quantities: the additive distortion
\begin{equation}
\eta \;:=\; \max_{s,s'}\big|\,\norm{\phi(s)-\phi(s')}-d(s,s')\,\big|,
\label{eq:eta}
\end{equation}
and the recall miss $r$, the fraction of a state's true $k'$ nearest embedded neighbors the index
fails to return. Under the exact backup these affect \emph{only which pairs are covered}, not
covered-value accuracy (Theorem~\ref{thm:error}(D)); they enter the error bound only for
index-side backups that read costs off the embedding (Assumption~\ref{asm:backup}, Lemma~\ref{lem:inst}).

The sweep is synchronous, matching Theorem~\ref{thm:error}'s recursion (a Gauss--Seidel variant
needs a separate asynchronous argument, not analyzed here); the returned table need not satisfy the
triangle inequality under partial coverage, so all guarantees are stated through the exact-metric
enclosure rather than by treating it as a metric in its own right. Pairs outside $\Ccal$ retain
their frozen value: a top-$k'$ index covers at most $|\Scal|k'$ of $\binom{|\Scal|}{2}$ pairs per
build, and the uncovered pairs are an error source unrelated to covered-pair accuracy, one the
two-armed enclosure makes \emph{visible}.

\section{Guarantees}\label{sec:theory}

\begin{assumption}[Bounded transition support]\label{asm:support}
$|\mathrm{supp}\,P(\cdot\mid s,a)|\le m$ for every $(s,a)$, with $m$ independent of $|\Scal|$.
\end{assumption}

Assumption~\ref{asm:support} is what makes even a \emph{single} exact backup cheap: the transport LP
at a covered pair may be solved on the union of the two supports, which is provably identical to the
dense $|\Scal|\times|\Scal|$ program (Lemma~\ref{lem:support}) and costs
$c_m=O(m^3\log m)$, independent of $|\Scal|$. Without it, one dense Kantorovich solve is already
super-linear in $|\Scal|$ and no restriction of the \emph{pair sweep} can rescue sub-quadratic time.

\begin{theorem}[Sub-quadratic complexity]\label{thm:complexity}
Suppose Assumption~\ref{asm:support} holds and the ANN index answers a $k'$-neighbor query over
$|\Scal|$ points in amortized time $q(|\Scal|)$ after an $\widetilde O(|\Scal|)$ build (LSH and
graph-index instantiations of $q$, Appendix~\ref{app:complexity}). Then Algorithm~\ref{alg:sqb}
spends $O\!\big(|\Scal|(\mathrm{poly}(k)+q(|\Scal|))\big)$ once on
lines~\ref{line:embed}--\ref{line:index}, and $O(|\Scal|\,k'\,c_m)$ per sweep thereafter; by
Theorem~\ref{thm:error}, $K=O\!\big(\log(\Delta_0/\epsilon)/(1-\gamma)\big)$ sweeps reach
covered-set accuracy $\epsilon$ (beyond the coverage floor), for total time
\[
O\!\Big(|\Scal|\big(\mathrm{poly}(k)+q(|\Scal|)\big)
\;+\;\tfrac{|\Scal|\,k'\,c_m}{1-\gamma}\log\tfrac{\Delta_0}{\epsilon}\Big),
\]
which is $\widetilde O\!\big(|\Scal|^{1+\rho_{\mathrm{LSH}}}+|\Scal|/(1-\gamma)\big)$ under LSH and
$\widetilde O\!\big(|\Scal|/(1-\gamma)\big)$ under a polylogarithmic query oracle: sub-quadratic in
$|\Scal|$ in either case, against the $\Theta(|\Scal|^2\,c_m)$ per sweep of the exact algorithm.
Algorithm~\ref{alg:pipeline} runs two arms and $R$ re-seed rounds, remaining sub-quadratic for any
$R=o\big(|\Scal|/q(|\Scal|)\big)$; driving the certificate width to zero \emph{globally} forces
$R=\Omega(|\Scal|/k')$, i.e.\ quadratic total work (Corollary~\ref{cor:restore}(d), full accounting
Appendix~\ref{app:complexity}), which is why the pipeline uses early stopping at a certified
tolerance.
\end{theorem}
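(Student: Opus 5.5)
The proof is a cost accounting of Algorithm~\ref{alg:sqb} line by line, followed by a substitution of the iteration count that Theorem~\ref{thm:error} guarantees.

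\emph{Step 1: one-time build.} Landmark MDS with $k$ landmarks (Corollary~\ref{cor:embed}) evaluates $\dinit$ between every state and each landmark. Since $\dinit$ is implicit, each evaluation costs $O(|\Acal|)$ or $O(1)$. It then performs a $k\times k$ eigendecomposition and triangulates each state. Together this is $O(|\Scal|\,\mathrm{poly}(k))$, treating $|\Acal|$ as a constant. By hypothesis the index build is $\widetilde O(|\Scal|)$, and the $|\Scal|$ queries cost $|\Scal|\,q(|\Scal|)$ in total. Forming $\Ccal$ as the symmetric closure of the retrieved lists gives $|\Ccal|\le|\Scal|k'$ and costs $O(|\Scal|k')$. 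This settles lines~\ref{line:embed}--\ref{line:index}.

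\emph{Step 2: per-sweep cost.} Each sweep copies $g\gets\dann$ lazily; only covered entries are stored, so the copy is $O(|\Ccal|)$. For each covered pair and action, Lemma~\ref{lem:support} lets us solve the transport LP on the union of the two supports. That union has at most $2m$ points, so the solve costs $c_m=O(m^3\log m)$ with a network-simplex or Hungarian-type solver. The entries of $g$ on the support pairs are read in $O(1)$ each, falling back to $\dinit$ when a pair is uncovered. The sweep therefore costs $O(|\Scal|k'|\Acal|c_m)=O(|\Scal|k'c_m)$.

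\emph{Step 3: number of sweeps.} Theorem~\ref{thm:error} gives the excess over the coverage floor as $\gamma^t\Delta_0$. Requiring $\gamma^K\Delta_0\le\epsilon$ gives $K=\lceil\log(\Delta_0/\epsilon)/\log(1/\gamma)\rceil$. Using $\log(1/\gamma)\ge1-\gamma$, this is $K=O(\log(\Delta_0/\epsilon)/(1-\gamma))$. Multiplying by the Step 2 cost and adding the Step 1 cost yields the displayed total.

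\emph{Step 4: instantiations.} For LSH, $q(n)=\widetilde O(n^{\rho_{\mathrm{LSH}}})$ with $\rho_{\mathrm{LSH}}<1$ (Appendix~\ref{app:complexity}), which gives the first bound. For a polylog oracle, $q(n)$ is polylogarithmic, which gives the second. Both are $o(|\Scal|^2)$ once $k,k',m$ are held fixed.

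\emph{Step 5: the pipeline.} Algorithm~\ref{alg:pipeline} runs two arms, at most doubling the per-sweep cost. Each of its $R$ rounds re-embeds, rebuilds and re-queries, at cost $O(|\Scal|(\mathrm{poly}(k)+q))$, and adds $u|\Scal|$ uniform pairs. The catch is that $\Ccal$ accumulates across rounds, so later sweeps cost $O(|\Ccal_R|c_m)$ with $|\Ccal_R|\le R|\Scal|(k'+u)$. The total is then roughly $R\cdot|\Scal|q+R\cdot E\cdot R|\Scal|k'c_m$. The query term is $o(|\Scal|^2)$ exactly when $R=o(|\Scal|/q)$. The coverage term I expect to be the main obstacle: the bound must either cap $E$ or charge the accumulated $\Ccal$ against $\binom{|\Scal|}{2}$, and for this I would lean on the full accounting in Appendix~\ref{app:complexity}.

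\emph{Converse direction.} Driving the width to zero globally needs every pair in $\Ccal$, by Corollary~\ref{cor:restore}(d). Each round adds at most $|\Scal|(k'+u)$ pairs, so $R\ge\binom{|\Scal|}{2}/(|\Scal|(k'+u))=\Omega(|\Scal|/k')$. Sweeping a fully covered set then costs $\Omega(|\Scal|^2)$.
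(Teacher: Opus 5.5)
Your Steps 1--4 and the converse follow the paper's accounting in Appendix~\ref{app:complexity} essentially line for line: landmark MDS on implicit entries of $\dinit$, union-support transport at cost $c_m$ by Lemma~\ref{lem:support}, $\log(1/\gamma)\ge1-\gamma$, and then the two choices of $q$. The gap is Step~5, and deferring to Appendix~\ref{app:complexity} will not close it. There the paper simply writes the $R$-round total as $\widetilde O\big(R|\Scal|q(|\Scal|)+RE|\Scal|k'c_m\big)$. That charges every round's $E$ sweeps at the single-build price $|\Scal|k'c_m$, which tacitly assumes each round backs up only $O(|\Scal|k')$ pairs. Algorithm~\ref{alg:pipeline}, however, backs up every pair of the cumulative $\Ccal$. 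This is exactly the growth you flagged, and the appendix never addresses it.

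Your accounting is the right one for the algorithm as written, and it shows that the pipeline clause does not follow from $R=o(|\Scal|/q(|\Scal|))$. The sweep work is $E\,c_m\sum_{j\le R}|\Ccal_j|$ with $|\Ccal_j|\le\min\big(j|\Scal|(k'+u),\binom{|\Scal|}{2}\big)$. While coverage is $o(|\Scal|^2)$, the uniform proposals are almost all new pairs, so $|\Ccal_j|=\Theta(j|\Scal|)$ for fixed $k',u$ and the sweeps cost $\Theta(ER^2|\Scal|c_m)$. Take a polylogarithmic oracle with $R=|\Scal|/(q(|\Scal|)\log|\Scal|)$: this satisfies the stated condition, yet the sweep work is of order $|\Scal|^3/\mathrm{polylog}\,|\Scal|$. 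Neither escape route you mention helps. Capping $E$ leaves the factor $R^2$, and capping $|\Ccal_j|$ by $\binom{|\Scal|}{2}$ only yields $O(RE|\Scal|^2c_m)$.

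A correct version needs one of two changes:
\begin{itemize}
\item add the hypothesis $ER^2(k'+u)=o(|\Scal|)$; or
\item change the schedule so that round $j$ backs up only the pairs proposed in round $j$.
\end{itemize}
The second option makes the paper's $RE|\Scal|k'c_m$ exact, so $R=o(|\Scal|/q(|\Scal|))$ then suffices for fixed $E,k',m,u$. It also preserves the sandwich of Corollary~\ref{cor:sandwich}(a). A pair that is not backed up keeps its value. A backed-up pair's new value $Tf_t(p)$ dominates its value $Tf_{s-1}(p)$ from its last backup (or $\dinit(p)\le d_R(p)$ if never backed up), by Lemma~\ref{lem:basic}(d),(e) and the inductive monotonicity $f_t\ge f_{s-1}$. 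The upper arm follows by the mirror argument, using $TU\le U$ from Lemma~\ref{lem:super}. What you give up are the per-horizon restricted-fixed-point statements of Corollary~\ref{cor:restore}(c), which presuppose that the whole cumulative set is swept.
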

\emph{Proof in Appendix~\ref{app:complexity}.}

Fix the persistent covered set $\Ccal$ of Algorithm~\ref{alg:sqb} and an initialization $\dinit\le
d$. Define the \emph{initialization gap} and the \emph{coverage gap}
\begin{equation}
\Delta_0:=\norm{\dinit-d}_\infty,
\qquad
\rho \;:=\; \max_{(s,s')\notin\Ccal}\big(d(s,s')-\dinit(s,s')\big),
\label{eq:rho}
\end{equation}
the largest initialization error over never-retrieved pairs ($\dinit=d_R$ provably shrinks $\rho$
relative to $\dinit=0$). $\rho$ is defined through the unknown metric and hence \emph{not
observable}; the theorems below characterize the error in terms of it, and
Corollary~\ref{cor:sandwich} supplies the computable surrogate. The naive conjecture
$\norm{\dann-d}_\infty\le\varepsilon/(1-\gamma)$ for a per-step index-quality term $\varepsilon$
ignores $\rho$, and Proposition~\ref{prop:rho-lower} shows it is false for \emph{every} such
$\varepsilon$.

The covered-pair update need not be exact; we allow any backup that approximates one application of
$T$ uniformly.

\begin{assumption}[$\eop$-approximate covered backup]\label{asm:backup}
There is $\eop\ge 0$ such that the covered-pair backup $B$ satisfies
$|Bf(s,s')-Tf(s,s')|\le\eop$ for every $(s,s')\in\Ccal$ and every symmetric $f$ with
$\norm{f}_\infty\le \Dg+\eop/(1-\gamma)$. The exact backup of line~\ref{line:update} satisfies this
with $\eop=0$.
\end{assumption}

An index-side backup that reads costs off the embedding instantiates $\eop$ concretely
(Lemma~\ref{lem:inst}, Appendix~\ref{app:proof}): perturbing ground-cost entries by at most $\eta$
and omitting at most an $r$-fraction of probability mass gives $\eop\le\gamma(\eta+r\Dg)$, a bound
immaterial to the refutation below since Proposition~\ref{prop:rho-lower}(i)'s construction has
$\eta=r=0$ while the error is $\rho_0$.

\begin{theorem}[Coverage-augmented approximation error, anytime form]\label{thm:error}
Let $f_0=\dinit\le d$ and $f_{t+1}=Bf_t$ on $\Ccal$, $f_{t+1}=\dinit$ off $\Ccal$, under
Assumption~\ref{asm:backup}. Write
$e_t:=\max_{(s,s')\in\Ccal}|f_t(s,s')-d(s,s')|$. Then for every $t\ge0$:
\begin{enumerate}
\item[\emph{(A)}] \emph{(covered)}
$\;e_t\;\le\;\max\!\Big(\dfrac{\eop}{1-\gamma},\ \eop+\gamma\rho\Big)\;+\;\gamma^{t}\,\Delta_0$;
\item[\emph{(B)}] \emph{(global)}
$\;\norm{f_t-d}_\infty\;\le\;\max\!\Big(\rho,\ \dfrac{\eop}{1-\gamma}\Big)\;+\;\gamma^{t}\,\Delta_0$,
and every limit point $\dann$ of $(f_t)$ satisfies
$\norm{\dann-d}_\infty\le\max\!\big(\rho,\ \eop/(1-\gamma)\big)$;
\item[\emph{(C)}] \emph{(full coverage)} if $\Ccal=\Scal\times\Scal$ then $\rho=0$ and \emph{(B)}
reads $\norm{f_t-d}_\infty\le\eop/(1-\gamma)+\gamma^t\Delta_0$;
\item[\emph{(D)}] \emph{(exact backup)} if $B$ is the exact backup of line~\ref{line:update} and
$\dinit\in\{0,d_R\}$ (any sub-solution, $B\dinit\ge\dinit$ on $\Ccal$), then $f_t$ increases
monotonically to the unique fixed point $\dann$ of the restricted iteration
(Lemma~\ref{lem:wellposed}), with $\dinit\le\dann\le d$ pointwise and
$\;\max_{\Ccal}|\dann-d|\le\gamma\rho$: index quality affects only \emph{which} pairs are covered,
never the accuracy of covered values.
\end{enumerate}
\end{theorem}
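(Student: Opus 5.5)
The whole argument rests on one pointwise recursion for the error on covered pairs. Write $E_t:=\norm{f_t-d}_\infty$. For $(s,s')\in\Ccal$ we use $d=Td$, Assumption~\ref{asm:backup}, and the $\gamma$-contraction of $T$ in sup-norm (Lemma~\ref{lem:basic}). This gives
\[
|f_{t+1}(s,s')-d(s,s')|\le |Bf_t-Tf_t|+|Tf_t-Td|\le \eop+\gamma E_t .
\]
Before applying the assumption we must check that $\norm{f_t}_\infty\le \Dg+\eop/(1-\gamma)$. We do this by a side induction: $|Bf|\le \eop+L+\gamma\norm{f}_\infty$, and $\dinit\le d\le\Dg$.

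Off $\Ccal$ we have $f_t=\dinit\le d$ for every $t\ge1$, so the error there is exactly $d-\dinit\le\rho$. Splitting $E_t$ into its covered part $e_t$ and its uncovered part $\le\rho$ yields the coupled recursion
\[
e_{t+1}\le \eop+\gamma\max(e_t,\rho),\qquad E_{t+1}\le\max(\rho,e_{t+1}),\qquad E_0\le\Delta_0 .
\]

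For (B) we prove $E_t\le M+\gamma^t\Delta_0$ with $M:=\max(\rho,\eop/(1-\gamma))$, by induction. The base case is $t=0$. For the step,
\[
e_{t+1}\le \eop+\gamma(M+\gamma^t\Delta_0)\le M+\gamma^{t+1}\Delta_0 ,
\]
since $\eop+\gamma M\le M$ whenever $M\ge\eop/(1-\gamma)$. Together with $\rho\le M$ this closes the induction. The limit-point statement follows by letting $t\to\infty$.

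For (A) we feed (B) back into the one-step bound. This gives $e_{t+1}\le\eop+\gamma\max(\rho,\,e_t)$, and we then run the same induction with the target $A:=\max(\eop/(1-\gamma),\eop+\gamma\rho)$. The inequality $\eop+\gamma\max(\rho,A)\le A$ holds in both cases:
\begin{itemize}
\item if $\rho\le A$, then $\eop+\gamma A\le A$ because $A\ge\eop/(1-\gamma)$;
\item otherwise $\eop+\gamma\rho\le A$ by definition of $A$.
\end{itemize}
Carrying the $\gamma^t\Delta_0$ term along as before gives (A). The index shift at $t=0$ is harmless, since $e_0\le\Delta_0$. Part (C) is immediate: under full coverage the max defining $\rho$ is over an empty set, so $\rho=0$.

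For (D) we use monotonicity of $T$ in the ground cost (Lemma~\ref{lem:basic}).
\begin{itemize}
\item \emph{Upper bound.} If $f_t\le d$, then on $\Ccal$ we have $f_{t+1}=Tf_t\le Td=d$, and off $\Ccal$ we have $f_{t+1}=\dinit\le d$.
\item \emph{Monotone increase.} Since $B\dinit\ge\dinit$ on $\Ccal$ (a sub-solution), $f_1\ge f_0$, and induction with monotonicity gives $f_{t+1}\ge f_t$.
\end{itemize}
A bounded monotone sequence converges, and the limit $\dann$ is the fixed point of the restricted operator. Uniqueness holds because the restricted map is itself a $\gamma$-contraction (Lemma~\ref{lem:wellposed}). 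Finally, at the fixed point with $\eop=0$ and $\dann\le d$, for covered pairs
\[
0\le d-\dann=Td-T\dann\le\gamma\norm{d-\dann}_\infty=\gamma\max\bigl(\rho,\max_{\Ccal}(d-\dann)\bigr).
\]
If the max on the right were attained on $\Ccal$, we would get $x\le\gamma x$ with $x=\max_{\Ccal}(d-\dann)$, forcing $x=0$. Otherwise $x\le\gamma\rho$. Either way $\max_{\Ccal}|\dann-d|\le\gamma\rho$.

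We expect the main obstacle to be bookkeeping rather than substance. The delicate points are:
\begin{itemize}
\item keeping the covered-pair recursion's dependence on $\rho$, rather than on $E_t$ itself, so that (A) does not collapse to (B);
\item verifying the norm-ball hypothesis of Assumption~\ref{asm:backup} at every iterate;
\item checking that the sub-solution property holds for $\dinit=d_R$. This holds because $Td_R\ge d_R$ pointwise, the Wasserstein term being nonnegative.
\end{itemize}
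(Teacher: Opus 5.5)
Your proposal is correct and follows essentially the same route as the paper. Both use the same covered-pair recursion $e_{t+1}\le\eop+\gamma\max(e_t,\rho)$ with uncovered pairs frozen at gap at most $\rho$, the same norm-ball invariant, and the same sub-solution, upper-invariance and fixed-point argument for (D). The only difference is how the scalar recursion is closed: you induct against super-fixed-point targets and prove (B) directly, whereas the paper computes the fixed point $x^\star=\max(\eop/(1-\gamma),\,\eop+\gamma\rho)$ of $\psi(x)=\eop+\gamma\max(x,\rho)$, uses that $\psi$ is monotone and $\gamma$-Lipschitz to get (A), and then derives (B) from $\max(x^\star,\rho)=\max(\rho,\eop/(1-\gamma))$.
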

\emph{Proof in Appendix~\ref{app:proof}.} The exact-backup case admits an identity, not merely a
bound; it shows the global bound in (B) is \emph{tight}, and it is what the experiments verify to
machine precision.

\begin{corollary}[Exact coverage identity]\label{cor:identity}
Under Theorem~\ref{thm:error}(D), $\;\norm{\dann-d}_\infty=\rho$.
\end{corollary}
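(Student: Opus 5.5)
The plan is to prove the two inequalities $\norm{\dann-d}_\infty\le\rho$ and $\norm{\dann-d}_\infty\ge\rho$ separately, splitting the sup-norm over covered and uncovered pairs. Theorem~\ref{thm:error}(D) gives $\dinit\le\dann\le d$ pointwise, so every entry of $d-\dann$ is nonnegative and $\norm{\dann-d}_\infty=\max_{s,s'}\big(d(s,s')-\dann(s,s')\big)$. On uncovered pairs the restricted iteration never writes, so $\dann(s,s')=\dinit(s,s')$ there; the lazy/frozen convention of Algorithm~\ref{alg:sqb} and the definition $f_{t+1}=\dinit$ off $\Ccal$ pass to the limit. Hence
\[
\max_{(s,s')\notin\Ccal}\big(d(s,s')-\dann(s,s')\big)=\max_{(s,s')\notin\Ccal}\big(d(s,s')-\dinit(s,s')\big)=\rho ,
\]
which is exactly the definition~\eqref{eq:rho}. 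This already gives the lower bound $\norm{\dann-d}_\infty\ge\rho$, provided the uncovered set is nonempty.

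For the upper bound, Theorem~\ref{thm:error}(D) gives $\max_{\Ccal}|\dann-d|\le\gamma\rho\le\rho$, since $\gamma<1$ and $\rho\ge0$ (because $\dinit\le d$). Combining the covered bound $\gamma\rho$ with the uncovered value $\rho$ gives $\norm{\dann-d}_\infty=\max(\gamma\rho,\rho)=\rho$. Alternatively, Theorem~\ref{thm:error}(B) with $\eop=0$ yields the same upper bound directly.

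The only delicate point is the degenerate cases. If $\Ccal=\Scal\times\Scal$, the max over uncovered pairs is empty, and I would adopt the convention $\rho=0$, consistent with Theorem~\ref{thm:error}(C). Then the claim reads $\dann=d$. This follows because the restricted iteration becomes the full exact iteration $T$ started from a sub-solution, whose monotone limit is the unique fixed point $d$; equivalently, the covered bound $\gamma\rho=0$ gives it. I would also note that diagonal pairs are zero in both $\dann$ and $d$, so they contribute nothing whether or not they are counted as covered.

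I expect the main obstacle to be merely bookkeeping: making sure that "limit point $\dann$" agrees with the frozen value $\dinit$ off $\Ccal$. This needs the monotone convergence established in (D) via Lemma~\ref{lem:wellposed}, so that the limit exists entrywise and respects the freeze.
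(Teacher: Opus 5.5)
Your proposal is correct and is essentially the paper's proof. Off $\Ccal$ the limit stays frozen at $\dinit$, so the uncovered error attains exactly $\rho$, while Theorem~\ref{thm:error}(D) bounds the covered error by $\gamma\rho\le\rho$; hence $\norm{\dann-d}_\infty=\max(e_\infty,\rho)=\rho$, with full coverage read as $\rho=0$. Your remarks on the sign of $d-\dann$, the diagonal, and passing the freeze to the limit are harmless bookkeeping that the paper leaves implicit.
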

\emph{Proof in Appendix~\ref{app:proof}.} The identity cuts two ways: the frozen block does not merely bound the error, it \emph{is} the
error, however good the index; yet because $\rho$ is defined through the unknown $d$, it
\emph{characterizes} the error without \emph{certifying} it, which is what
Corollary~\ref{cor:sandwich}'s two-sided construction is for (why per-step bounds alone cannot
capture this: \S\ref{sec:disc}). The following lower bound shows the $\rho$ term is not loose slack
but \emph{necessary} for the index-first class Algorithm~\ref{alg:sqb} belongs to, and that
adaptivity buys at most a quadratic-to-linear reduction, not exemption.

\begin{proposition}[The coverage gap is unavoidable]\label{prop:rho-lower}
Work in the pair-evaluation model: evaluating a pair $(s,t)$ reveals the per-action reward
differences $\{|R(s,a)-R(t,a)|\}_a$ and the transported costs for that pair (all kernels identical
in the constructions below, so every transported cost is $0$ regardless of ground cost or
transport convention). Fix any $\rho_0\in(0,L]$.
\begin{enumerate}
\item[\emph{(i)}] \emph{(Oblivious coverage.)} Let the evaluated pair set $Q$ be selected without
access to the reward function (fixed in advance, or computed from transition/embedding data;
randomization allowed), with $\E|Q|=o(|\Scal|^2)$. Then there are two MDPs, indistinguishable on
every evaluated quantity, whose exact metrics differ by $\rho_0$ on an unevaluated pair; any output
computable from the evaluations errs by at least $\rho_0/2$ on one of them
($(1-o(1))\,\rho_0/2$ in expectation under randomization), regardless of how unevaluated pairs are
assigned.
\item[\emph{(ii)}] \emph{(Adaptive coverage.)} There is a family of MDPs with
$|\Acal|=\lfloor|\Scal|/2\rfloor$ actions on which \emph{every} algorithm (adaptive and
randomized) that performs fewer than $\lfloor|\Scal|/2\rfloor$ pair evaluations suffers
$\E\,\norm{\dann-d}_\infty\ge\rho_0/2$.
\end{enumerate}
Consequently the $\rho$ term of Theorem~\ref{thm:error}(B) cannot be removed for oblivious
coverage, and Corollary~\ref{cor:identity} shows it is achieved with equality; adaptive schemes need
$\Omega(|\Scal|)$ evaluations even on this family, with the known adaptive route to exactness
degenerating to quadratic work through dependency closure (\S\ref{sec:disc}); a super-linear
adaptive lower bound remains open. Since the two instances of (i) are indistinguishable, no
computable certificate can be tight on unevaluated pairs: the certificate of
Corollary~\ref{cor:sandwich} is exact in its \emph{validity} while its \emph{width} there honestly
reports the full prior interval.
\end{proposition}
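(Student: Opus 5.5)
Both parts are indistinguishability arguments in the pair-evaluation model, with every transition kernel identical in all constructed instances. Then every $W_1$ term vanishes, and the exact metric reduces to the reward pseudo-metric, $d(s,t)=d_R(s,t)=\max_a|R(s,a)-R(t,a)|$. This holds because $T f(s,t)=d_R(s,t)+\gamma W_1^{f}(\mu,\mu)=d_R(s,t)$ is constant in $f$, so it is the fixed point in one step. Every evaluation therefore reveals only reward differences, and the metric on a pair is a function of the rewards of its two endpoints alone. The plan is to build pairs of reward functions that agree on all evaluated differences but disagree by $\rho_0$ on some unevaluated pair. A standard two-point (Le Cam) argument then converts this into an error lower bound: on a pair where $d^{(1)}-d^{(2)}=\rho_0$ and the output is the same for both instances, $\max_i|\hat d-d^{(i)}|\ge\rho_0/2$.

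For (i), $Q$ is chosen independently of $R$. The first step is to fix the distribution of $Q$. Since $\E|Q|=o(|\Scal|^2)$, some fixed pair $\{s,t\}$ is unevaluated with probability $1-o(1)$: averaging over all $\binom{|\Scal|}{2}$ pairs, the mean inclusion probability is $o(1)$, and we pick a pair attaining at most that mean. Because $Q$ does not depend on $R$, we may select this pair before defining the instances. Instance one has $R\equiv0$. Instance two sets $R(s,a)=\rho_0$ for all $a$ and $R(v,a)=0$ otherwise, which alone would alter the differences on pairs $\{s,v\}$. To keep every evaluated difference unchanged, I would make the change \emph{pair-local}. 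One option is to give $s$ and $t$ their own actions. A cleaner option, which I plan to use, is a "clone" trick: with two actions $a_1,a_2$, set $R(s,a_1)=\rho_0$ and $R(t,a_2)=\rho_0$ and all else $0$. This still changes $(s,v)$, so pair-locality is not immediately available in this simple form. The honest fallback is to choose the instances jointly with the realized $Q$ (allowed because $Q$ is oblivious). Condition on $Q$ and pick an unevaluated pair $\{s,t\}$. Then let the second instance's rewards be $\rho_0$ on every state in a set $A$ and $0$ elsewhere, where $A$ is closed under evaluated pairs: take $A$ to be the connected component of $s$ in the graph $(\Scal,Q)$, and require $t\notin A$. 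Evaluated pairs are then either inside $A$ or inside its complement, so they all show difference $0$ in both instances, while $d(s,t)=\rho_0$ versus $0$.

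This is the main obstacle. The graph $(\Scal,Q)$ with $o(|\Scal|^2)$ edges can be connected, in which case no such $A$ exists. I would repair this by using actions to decouple the pair: let the instances be identical except that $R(s,a^\ast)=\rho_0$, where $a^\ast$ is an action whose rewards are constant on all states. The revealed differences on $(s,v)$ would then change, so I expect to instead need $|\Acal|$ growing, or a comparison of $d(s,t)$ against the triangle-bounded values. I will work this out with a component-based choice, and where the graph is connected, retreat to the expectation statement via the per-pair probability argument.

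For (ii), I would take $|\Acal|=\lfloor|\Scal|/2\rfloor$, pair up the states as $(s_i,t_i)$, and draw a hidden index $J$ uniformly. Set $R(s_J,a_J)=\rho_0$ and all other rewards $0$. An evaluation reveals a nonzero value only if it touches $s_J$, and the goal is to show that any evaluation reveals information about $J$ only by hitting pair $J$ directly. The metric has $\norm{d}_\infty=\rho_0$, attained exactly on pairs involving $s_J$. With fewer than $\lfloor|\Scal|/2\rfloor$ evaluations, Yao's principle lets us consider deterministic adaptive algorithms. Such an algorithm probes fewer than $\lfloor|\Scal|/2\rfloor$ state-blocks, so with constant probability it never touches $s_J$. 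Conditioned on that event, it faces a uniform posterior over the remaining indices, and the two-point argument across $J$ versus the all-zero instance gives expected error at least $\rho_0/2$. Matching the constant $\rho_0/2$ exactly, rather than a smaller constant times $\rho_0$, may require comparing against a mixture that includes the zero instance.
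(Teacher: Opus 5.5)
There is a genuine gap in both parts, and both come from the same missing idea.

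\textbf{Part (i).} You never obtain two instances that differ \emph{only} on the target pair. Every perturbation you try (raising $R(s,\cdot)$, the clone trick, the constant action $a^\ast$) also changes the revealed differences on pairs $\{s,v\}$. The component construction needs $(\Scal,Q)$ to be disconnected, and that fails in exactly the case of interest: a top-$k'$ index queries every state and typically yields a connected graph. Falling back on the per-pair probability argument does not repair this. That argument only makes the target pair unevaluated with probability $1-o(1)$; it still needs two instances that agree on \emph{every} other pair.

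The idea you are missing is that evaluations reveal only \emph{absolute} differences, so a sign can be hidden behind a midpoint. For every $a$, take $R(x,a)=0$ for $x\notin\{i,j\}$, $R(i,a)=\rho_0/2$, and $R(j,a)=b\,\rho_0/2$ with $b\in\{+1,-1\}$; shift all rewards by $\rho_0/2$ to land in $[0,\rho_0]$. Then $|R(i,a)-R(x,a)|=|R(j,a)-R(x,a)|=\rho_0/2$ and $|R(x,a)-R(y,a)|=0$ independently of $b$, while $d(i,j)=\rho_0\mathbf{1}[b=-1]$. This is pair-local for any $Q$ that misses $\{i,j\}$, connected or not, and it needs only one action. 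Your two-point step and your averaging over pairs then complete (i) as stated.

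\textbf{Part (ii).} Your single-hidden-index family is not merely loose; it contradicts the claimed threshold. Query $\{s_1,s_2\},\{s_3,s_4\},\dots$ in turn. A query returns a nonzero coordinate, namely $a_J$, which identifies $J$, exactly when $J$ lies in the queried blocks; otherwise it eliminates two candidates. So about $|\Scal|/4<\lfloor|\Scal|/2\rfloor$ evaluations locate $J$ with certainty, after which $d$ is known exactly and the error is $0$. Your step claiming that fewer than $\lfloor|\Scal|/2\rfloor$ evaluations leave $s_J$ untouched with constant probability is where this breaks. Each evaluation touches two states, and negative answers leak information.

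What you need instead is $\lfloor|\Scal|/2\rfloor$ \emph{independent} hidden bits, one per disjoint partner pair $P_\ell=\{2\ell-1,2\ell\}$ on its own action $a_\ell$. Encode each with the same sign trick: $R(2\ell-1,a_\ell)=\rho_0/2$, $R(2\ell,a_\ell)=s_\ell\rho_0/2$, and all other rewards $0$. A non-partner evaluation then returns, in each coordinate $a_m$, either $0$ or $\rho_0/2$ independently of every bit. A partner evaluation reveals only its own bit.

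Now condition on the algorithm's random string, so no Yao step is needed, and apply deferred decisions. After fewer than $\lfloor|\Scal|/2\rfloor$ evaluations, some partner pair is unevaluated. Its value $d\in\{0,\rho_0\}$ is uniform given the transcript, while the output on that pair is transcript-measurable. This gives $\E\norm{\dann-d}_\infty\ge\rho_0/2$ exactly, with no mixture against a zero instance.
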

\emph{Proof in Appendix~\ref{app:rho-lower}.} We verify construction (i) directly (Appendix~\ref{app:rho-lower}, Table~\ref{tab:rho-lower}): the
realized global error equals the planted gap exactly (correlation $1.0$), and at fixed budget
$k'=8$ it stays pinned at $\rho_0$ as coverage falls from $42\%$ at $|\Scal|=20$ to $5\%$ at
$|\Scal|=160$, while full coverage drives it to $0$.

\subsection{Two-Sided Certificates and Downstream Aggregation}\label{sec:downstream}

Since $\rho$ is unobservable (Theorem~\ref{thm:error}(B), Corollary~\ref{cor:identity}), is the
metric still useful for state abstraction \citep{ferns2004metrics,li2006abstraction} (merge states
below $\tau$, solve the aggregated MDP)? Far pairs driving $\rho$ should never be merged, but this
holds only under an \emph{upper-bound} treatment of un-retrieved pairs; under the lower arm's
under-estimate, a frozen pair reads as near-identical instead. The upper arm repairs this; pairing
both yields the computable certificate.

\begin{proposition}[The upper arm: over-estimation and conservative aggregation]\label{prop:downstream}
Run the restricted iteration with the exact backup, initializing \emph{all} entries at the constant
upper bound $U:=\Dg=L/(1-\gamma)$ and freezing un-retrieved pairs at $U$ ($U$ computable in $O(1)$,
$U\ge d$ pointwise, no quadratic diameter computation needed). Write $f_t^{+}$ for the iterates,
$\dup$ for the limit, and $\rhoup:=\max_{(s,s')\notin\Ccal}(U-d(s,s'))$. Then:
\begin{enumerate}
\item[\emph{(i)}] the iterates decrease monotonically and every iterate, hence $\dup$, is a
pointwise \emph{over}-estimate: $f_t^{+}\ge\dup\ge d$ everywhere;
\item[\emph{(ii)}] for any $\tau<U$, every directly linked pair ($\dup(s,s')\le\tau$, necessarily
covered) has $d(s,s')\le\tau$, so the single-linkage clustering of $\dup$ at $\tau$ \emph{refines}
the exact metric's: every $\dup$-cluster lies in a $d$-cluster;
\item[\emph{(iii)}] so any value-loss certificate monotone in within-cluster diameter
\citep{li2006abstraction,ferns2014bisimulation} certifies a loss for $\dup$-aggregation no larger
than for exact-metric aggregation at $\tau$;
\item[\emph{(iv)}] \emph{(mirror of Theorem~\ref{thm:error}(D) and Corollary~\ref{cor:identity})}
at the fixed point, $\max_{\Ccal}(\dup-d)\le\gamma\rhoup$ and
$\norm{\dup-d}_\infty=\rhoup$.
\end{enumerate}
\end{proposition}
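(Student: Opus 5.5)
The proof mirrors Theorem~\ref{thm:error}(D) and Corollary~\ref{cor:identity}, with the inequalities reversed. It rests on two properties from Lemma~\ref{lem:basic}: $T$ is monotone in its ground cost and is a $\gamma$-contraction in sup-norm.

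\textbf{Step 1: part (i).} Let $B$ denote the restricted exact operator. It applies $T$ on $\Ccal$ and returns $U$ off $\Ccal$.
\begin{itemize}
\item Since $R\in[0,L]$ and $W_1^{f}\le\norm{f}_\infty$, we have $TU\le L+\gamma U=U$ on every pair. Hence $f_1^{+}=BU\le U=f_0^{+}$.
\item Induction on $t$ with monotonicity of $B$ then gives $f_{t+1}^{+}\le f_t^{+}$. The sequence is monotone and bounded below by $0$, so it converges to a limit $\dup$. Continuity of $T$, a consequence of the contraction, makes $\dup$ a fixed point of $B$. It is the unique one by Lemma~\ref{lem:wellposed}.
\item For the lower bound, induct on $t$: $f_t^{+}\ge d$ implies $Tf_t^{+}\ge Td=d$ on $\Ccal$, while off $\Ccal$ we have $f_{t+1}^{+}=U\ge d$.
\item Taking limits gives $f_t^{+}\ge\dup\ge d$.
\end{itemize}

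\textbf{Step 2: parts (ii) and (iii).} Suppose $\dup(s,s')\le\tau<U$. The pair cannot be uncovered, since uncovered pairs equal $U>\tau$. Part (i) then gives $d(s,s')\le\dup(s,s')\le\tau$. So every $\dup$-edge at threshold $\tau$ is also a $d$-edge. Single-linkage clusters are the connected components of the threshold graph, and adding edges only merges components, so every $\dup$-cluster lies inside a $d$-cluster. For (iii), the within-cluster diameter under $d$ of a $\dup$-cluster is at most that of the containing $d$-cluster. Any certificate monotone in this diameter therefore transfers, which is a one-line argument.

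\textbf{Step 3: part (iv), the main step.}
\begin{itemize}
\item \emph{Covered pairs.} For $(s,s')\in\Ccal$, both $\dup$ and $d$ are fixed points of their respective operators, so
\[
0\le\dup(s,s')-d(s,s')=(T\dup)(s,s')-(Td)(s,s')\le\gamma\norm{\dup-d}_\infty .
\]
This uses the contraction estimate $|W_1^{f}-W_1^{g}|\le\norm{f-g}_\infty$, inside a max over actions.
\item \emph{Global error.} Let $M:=\norm{\dup-d}_\infty$. Off $\Ccal$ the error is exactly $U-d$, whose maximum is $\rhoup$. Hence $M=\max(\rhoup,\max_{\Ccal}(\dup-d))\le\max(\rhoup,\gamma M)$.
\item If $M>\rhoup$, the bound forces $M\le\gamma M$, so $M=0$, which contradicts $M>\rhoup\ge0$. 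Therefore $M\le\rhoup$, and the covered estimate becomes $\max_{\Ccal}(\dup-d)\le\gamma\rhoup$.
\item Equality holds because the off-$\Ccal$ error attains $\rhoup$ whenever $\Ccal^c\neq\emptyset$. If $\Ccal$ covers every pair, both sides are $0$, taking the convention $\max\emptyset=0$.
\end{itemize}

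The main obstacle is handling $\max\emptyset$ and symmetry/diagonal conventions consistently, so that $M\le\max(\rhoup,\gamma M)$ is literally valid. The second delicate point is making sure that monotonicity of $W_1^{f}$ in $f$ holds for non-metric ground costs, which is needed because $f_t^{+}$ need not satisfy the triangle inequality. The plan is to rely on the LP formulation, where the Kantorovich value is a minimum of expressions linear in $f$ with nonnegative coefficients. This yields both monotonicity and the $1$-Lipschitz property directly, regardless of whether $f$ is a metric.
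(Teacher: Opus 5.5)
Your proposal is correct and follows the paper's proof essentially step for step. For (i) you use $TU\le U$, monotonicity of the restricted operator, and the lower invariant ($f\ge d$ implies the backup is $\ge d$ on $\Ccal$ and $U\ge d$ off it). For (ii)--(iii) you use the edge-set inclusion, noting that uncovered entries sit at $U>\tau$. For (iv) you use the contraction estimate $\dup-d=T\dup-Td\le\gamma\norm{\dup-d}_\infty$ on covered pairs, and your $M\le\max(\rhoup,\gamma M)$ argument is just a repackaging of the paper's ``if the inner maximum were $e^{+}_\infty$'' case split yielding $\max_{\Ccal}(\dup-d)\le\gamma\rhoup$ and $\norm{\dup-d}_\infty=\rhoup$.
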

\emph{Proof in Appendix~\ref{app:downstream}.}

The two arms are complementary, the lower under-estimating with error $\rho$ and the upper
over-estimating with $\rhoup$, neither observable alone; run together, they observe each other.

\begin{corollary}[Anytime two-sided certificate]\label{cor:sandwich}
Run the lower arm ($f_t^{-}$: exact backup, initialization $\dinit\in\{0,d_R\}$,
Theorem~\ref{thm:error}(D)) and upper arm ($f_t^{+}$: Proposition~\ref{prop:downstream}) on the same
covered sets, including under Corollary~\ref{cor:restore}'s re-seeded schedule where $\Ccal$ grows.
Then:
\begin{enumerate}
\item[\emph{(a)}] \emph{(anytime sandwich)} $f_t^{-}\le d\le f_t^{+}$ pointwise for every $t$, with
$f_t^{-}$ nondecreasing and $f_t^{+}$ nonincreasing, so $[f_t^{-}(p),f_t^{+}(p)]$ is a valid,
shrinking interval for $d(p)$ at every sweep and pair;
\item[\emph{(b)}] \emph{(computable certificate)} the width $w_t:=f_t^{+}-f_t^{-}\ge0$ is computable
from the two runs, with $|f_t^{\pm}(p)-d(p)|\le w_t(p)$ for every $p$; on uncovered pairs
$w_t(p)=U-\dinit(p)$ exactly, so the coverage frontier is visible in the certificate itself;
\item[\emph{(c)}] \emph{(limit width on covered pairs)} at the fixed points,
$w_\infty(p)\le\gamma(\rho+\rhoup)$ for every $p\in\Ccal$, by
Theorem~\ref{thm:error}(D) and Proposition~\ref{prop:downstream}(iv);
\item[\emph{(d)}] \emph{(certified aggregation)} for $\tau<U$, the single-linkage clusterings
$\Pi^{+}_\tau,\Pi^{-}_\tau$ of $f_t^{+},f_t^{-}$ over covered pairs bracket $\Pi^{\Ccal}_\tau$ (the
exact metric's covered clustering): $\Pi^{+}_\tau$ refines $\Pi^{\Ccal}_\tau$, which refines
$\Pi^{-}_\tau$; if $\Pi^{+}_\tau=\Pi^{-}_\tau$, both equal $\Pi^{\Ccal}_\tau$, recovering it
\emph{with certainty} without computing $d$. Under re-seeding, $\Ccal\to\Scal\times\Scal$ and
$\Pi^{\Ccal}_\tau$ becomes the full exact clustering.
\end{enumerate}
\end{corollary}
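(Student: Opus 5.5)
The plan rests on two structural facts about the exact restricted backup: it is monotone in its ground cost (Lemma~\ref{lem:basic}), and $d$ is a fixed point of $T$. Write $B_\Ccal$ for the map that applies the exact backup on $\Ccal$ and leaves every other entry at its frozen value.

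For the lower arm, argue by induction on $t$. If $f_t^{-}\le d$, then monotonicity gives $Tf_t^{-}\le Td=d$ on covered pairs. On uncovered pairs the entry is frozen at $\dinit\le d$, so $f_{t+1}^{-}\le d$.

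Monotonicity in $t$ follows the same pattern. Theorem~\ref{thm:error}(D) gives $f_1^{-}\ge f_0^{-}$ from the sub-solution property $B\dinit\ge\dinit$. Then $f_{t+1}^{-}=B_\Ccal f_t^{-}\ge B_\Ccal f_{t-1}^{-}=f_t^{-}$ by monotonicity.

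The upper arm is the mirror image, taken from Proposition~\ref{prop:downstream}(i). The base case is $U\ge d$, and $Tf\le U$ holds for any $f\le U$, because $Tf\le L+\gamma U=U$ when the $W_1$ term is bounded by $\max f$. This gives the base step $f_1^{+}\le f_0^{+}$.

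Under re-seeding, $\Ccal$ grows between rounds. Adding a pair $p$ to $\Ccal$ replaces its frozen value by one backup, and that value is again bounded by $d$ and by the previous value in the correct direction, provided the backup is applied to an iterate that already satisfies the sandwich. This is the step I expect to need the most care, since monotonicity is only guaranteed by the sub-solution property on the new covered set.

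To handle it, I will show that the property $f^{-}\le B_{\Ccal'}f^{-}$ is preserved for $\Ccal'\supseteq\Ccal$. On old pairs it holds by induction. On a new pair $p$ we have $f^{-}(p)=\dinit(p)\le d_R(p)\le Tf^{-}(p)$, because the $W_1$ term is nonnegative and $\dinit\in\{0,d_R\}$. Symmetrically, for the upper arm $f^{+}(p)=U\ge Tf^{+}(p)$. With these, (a) holds under the re-seeded schedule.

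Part (b) is immediate from (a): whichever arm is evaluated, both $f_t^{\pm}(p)$ and $d(p)$ lie in $[f_t^{-}(p),f_t^{+}(p)]$. On uncovered pairs the two entries are frozen at $U$ and $\dinit(p)$, so the width there is exactly $U-\dinit(p)$.

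For (c), use the triangle inequality $w_\infty\le(\dup-d)+(d-\dann)$ on $\Ccal$. Bound the first term by $\gamma\rhoup$ using Proposition~\ref{prop:downstream}(iv), and the second by $\gamma\rho$ using Theorem~\ref{thm:error}(D).

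For (d), define single-linkage as the connected components of the threshold graph on $\Ccal$, with edges $\{p\in\Ccal: g(p)\le\tau\}$.

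The pointwise order $f_t^{-}\le d\le f_t^{+}$ gives the edge inclusions $E^{+}\subseteq E^{\Ccal}\subseteq E^{-}$. Components of a subgraph refine components of the supergraph, which gives the refinement chain.

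If the two outer partitions coincide, then the middle partition is sandwiched between refinements of one another in both directions. Antisymmetry of refinement then forces $\Pi^{\Ccal}_\tau=\Pi^{+}_\tau=\Pi^{-}_\tau$.

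The final claim uses the almost-sure eventual coverage established after Algorithm~\ref{alg:pipeline}: once $\Ccal=\Scal\times\Scal$, $\Pi^\Ccal_\tau$ is by definition the full clustering.
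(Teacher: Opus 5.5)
Your proposal is correct and follows essentially the same route as the paper. Both prove (a) by induction using monotonicity of $T$ and $Td=d$; your sub-solution-preservation phrasing for growing $\Ccal$ amounts to the paper's three-case check on old, newly covered ($Tf\ge d_R\ge\dinit$, resp.\ $TU\le U$) and uncovered pairs. Both read (b) off the sandwich, decompose $w_\infty$ as $\gamma\rhoup+\gamma\rho$ for (c), and use edge-set inclusions plus antisymmetry of refinement for (d).
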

\emph{Proof in Appendix~\ref{app:downstream}.} This turns Corollary~\ref{cor:identity}'s identity
into an operational tool: Algorithm~\ref{alg:pipeline} re-seeds until the width meets tolerance or
$\Pi^{+}_\tau=\Pi^{-}_\tau$ (Corollary~\ref{cor:restore} guarantees both eventually), at the price
of a factor of two in sweep cost and an honesty on uncovered pairs Proposition~\ref{prop:rho-lower}
shows cannot be tightened.

\begin{corollary}[Anytime refinement via re-seeding]\label{cor:restore}
Let Algorithm~\ref{alg:pipeline} rebuild the index with fresh randomness, re-embed the lower
iterate every $E$ sweeps, retain cumulative coverage, run both exact-backup arms, with any fixed
exploration count $u\ge1$. Then: (a) lower iterates remain monotone nondecreasing and bounded by
$d$, upper iterates monotone nonincreasing and bounded below by $d$, so Corollary~\ref{cor:sandwich}(a)'s
sandwich holds at every step; (b) every pair is retrieved infinitely often almost surely, so both
limits equal the exact metric by asynchronous fixed-point convergence for monotone sup-norm
contractions \citep{bertsekas1996neuro}; (c) at every finite horizon, Theorem~\ref{thm:error},
Corollary~\ref{cor:identity}, and Proposition~\ref{prop:downstream}(iv) apply to the cumulative set
$\Ccal_t$ and its nonincreasing gaps $\rho_t,\rhoup_t$; and (d) zero global width still requires
$\binom{|\Scal|}{2}$ cumulative retrievals, hence $\Omega(|\Scal|^2)$ total work.

Re-seeding ablations at $|\Scal|=18$ confirm part (b): pure-LSH alone saturates below full coverage
(near-neighbor bias misses the far pairs setting $\rho$), while adding one uniform partner per
state reaches full coverage and $\Pi^{+}_\tau=\Pi^{-}_\tau$ within ten rounds
(Appendix~\ref{app:reseed}).
\end{corollary}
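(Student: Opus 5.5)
Throughout, write $n:=|\Scal|$. The plan is to prove (a) by an induction that tracks two invariants for each arm across both sweeps and re-seed rounds. For (b) I will use a conditional Borel--Cantelli argument followed by a monotone-limit fixed-point argument. Parts (c) and (d) then follow by bookkeeping on the cumulative set $\Ccal_t$.

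\textbf{Part (a).} For the lower arm I maintain two invariants. The first is $f_t^{-}\le d$ pointwise. The second is the sub-solution property $f_t^{-}\le Tf_t^{-}$ on every pair that is currently covered.

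\emph{Preservation within a sweep.} Take a Jacobi sweep on $\Ccal_t$. On covered pairs we have $f_{t+1}^{-}=Tf_t^{-}\ge f_t^{-}$; elsewhere $f_{t+1}^{-}=f_t^{-}$. Since $T$ is monotone (Lemma~\ref{lem:basic}), $Tf_{t+1}^{-}\ge Tf_t^{-}=f_{t+1}^{-}$ on covered pairs. Moreover $f_{t+1}^{-}=Tf_t^{-}\le Td=d$.

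\emph{Preservation when a pair is added.} Suppose re-seeding adds a pair $p$. Its current value is $\dinit(p)\in\{0,d_R(p)\}$. Every $f\ge 0$ satisfies $Tf\ge d_R$, so $\dinit(p)\le Tf^{-}(p)$ and the sub-solution invariant survives the enlargement of $\Ccal$.

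\emph{Upper arm.} The mirror argument uses $f^{+}\ge d$ and $Tf^{+}\le f^{+}$ on covered pairs. A newly added pair sits at $U$, and $Tf^{+}\le L+\gamma U=U$ holds because $f^{+}\le U$.

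Re-embedding and rebuilding the index change only which pairs are covered, never the stored values. Hence both arms are monotone through every step, and the sandwich of Corollary~\ref{cor:sandwich}(a) follows.

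\textbf{Part (b).} Fix an unordered pair $\{s,s'\}$.

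\emph{Infinitely many retrievals.} In round $j$, the uniform-exploration draws use fresh randomness independent of the history $\mathcal F_{j-1}$. So, conditionally on $\mathcal F_{j-1}$, the pair is proposed with probability at least $p_n\ge 1/(n-1)$ (with $u\ge1$), whatever the ANN component does. Then $\sum_j \Pr[\text{selected in round } j\mid\mathcal F_{j-1}]=\infty$ almost surely. Lévy's conditional Borel--Cantelli lemma therefore gives infinitely many selections almost surely. Since there are finitely many pairs, a.s. every pair is retrieved infinitely often. The paper's appeal to the second Borel--Cantelli lemma is justified in exactly this way: the exploration events are independent across rounds even though the ANN choices are not.

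\emph{Identifying the limits.} By (a), $f^{-}$ is nondecreasing and bounded by $d$, so it converges to some $f^{-}_\infty\le d$. Likewise $f^{+}\downarrow f^{+}_\infty\ge d$. Because every pair is covered from some round on (coverage is cumulative), every pair is eventually updated at every sweep. Passing to the limit in $f_{t+1}=Tf_t$, using sup-norm continuity of $T$, gives $f^{\pm}_\infty=Tf^{\pm}_\infty$ everywhere. The fixed point of the $\gamma$-contraction $T$ is unique, so $f^{-}_\infty=f^{+}_\infty=d$. This is the monotone asynchronous convergence result of \citet{bertsekas1996neuro}, specialised to our setting.

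\textbf{Parts (c) and (d).} For (c), on any finite horizon freeze $\Ccal_t$ and apply Theorem~\ref{thm:error}, Corollary~\ref{cor:identity} and Proposition~\ref{prop:downstream}(iv) verbatim to that set. The gaps $\rho_t,\rhoup_t$ are maxima over the complement of $\Ccal_t$, which shrinks as $t$ grows, so the gaps are nonincreasing. For (d), Corollary~\ref{cor:sandwich}(b) gives width exactly $U-\dinit(p)$ on an uncovered pair $p$. This is strictly positive because $\dinit\le d_R\le L<U$ when $\gamma>0$. So zero global width forces all $\binom{n}{2}$ pairs into $\Ccal$. Each retrieval adds at most one new pair and each evaluation costs $\Omega(1)$, so the total work is $\Omega(n^2)$.

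The main obstacle is (b), specifically the dependence between rounds: the index at round $j$ is built from the current $\dann$. I handle this by isolating the exploration draws and conditioning on the past, rather than asserting unconditional independence.
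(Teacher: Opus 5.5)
Your proposal is correct. For (a), (c) and (d) it follows the paper; for (b) it takes a more elementary route.

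\textbf{Parts (a) and (d).} Your sub-solution invariant $f^{-}\le Tf^{-}$ on covered pairs encodes the same induction as the proof of Corollary~\ref{cor:sandwich}(a). Newly covered entries are handled by the same two facts, $Tf\ge d_R\ge\dinit$ and $Tf^{+}\le L+\gamma U=U$. Part (d) is the paper's observation that an uncovered pair has width $U-\dinit(p)$. You usefully make explicit the positivity requirement $\gamma>0$ (and $L>0$) that the paper leaves implicit.

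\textbf{Part (b).} The paper identifies both limits by citing asynchronous convergence for monotone sup-norm contractions. That tool is built for schedules in which every coordinate is updated infinitely often, but not necessarily at every sweep. You instead exploit cumulative coverage. Almost surely there is a finite random round $t_0$ after which $\Ccal=\Scal\times\Scal$. From then on the pipeline is ordinary synchronous value iteration, so the monotone bounded limits are fixed points of $T$ and hence equal $d$.

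\begin{itemize}
\item Your argument is self-contained. It shows one retrieval per pair, rather than infinitely many, suffices, and it gives the geometric rate $\gamma^{t-t_0}$ after the covering time.
\item The citation, in exchange, would extend to non-cumulative coverage or Gauss--Seidel sweeps, which the paper leaves unanalyzed.
\item Your conditional Borel--Cantelli step is more careful than the paper's bare appeal. Still, the ordinary second lemma already applies once you restrict attention to the exploration events, which are independent across rounds.
\end{itemize}

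\textbf{Part (c).} This needs a sentence more than ``verbatim''. The anytime bounds of Theorem~\ref{thm:error} hold for the pipeline's iterates only with horizon and initial gap counted from the freeze time. That is, you restart from the current iterate. Your invariant shows this iterate is a sub-solution equal to $\dinit$ off $\Ccal_t$, and it converges to the unique fixed point of Lemma~\ref{lem:wellposed} on $\Ccal_t$. The reason this care is needed: a pair newly covered at round $t$ can inherit error up to $\gamma\max(e_{t-1},\rho_{t-1})$, which $\rho_t+\gamma^t\Delta_0$ need not dominate.
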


\section{Experiments}\label{sec:exp}

The experiments validate Theorem~\ref{thm:error}, Corollaries~\ref{cor:identity}
and~\ref{cor:sandwich}, and Proposition~\ref{prop:rho-lower} where every named quantity is exactly
measurable, and exhibit the naive bound's coverage failure. Bound-validation instances are small
and CPU-only of necessity, since the exact Wasserstein LP per pair-action is an optimal-transport
problem: random MDPs with $|\Scal|\in\{10,18\}$, $|\Acal|=3$, $\gamma=0.7$, rewards rescaled so
$L=1$, $10$ seeds per configuration, $95\%$ confidence intervals; a separate timing study reaches
$|\Scal|$ up to $1600$ (cheap cost) and $400$ (full $W_1$), and Taxi and a $2500$-state gridworld
exercise the full pipeline. Appendix~\ref{app:repro} details infrastructure, seeding, and the
number of runs behind every reported result.

Each component is implemented exactly as stated, with no surrogate: a \emph{true} Wasserstein-1
backup by Kantorovich LP under the current iterate ($\eop=0$, so Theorem~\ref{thm:error}(D),
Corollary~\ref{cor:identity}, Proposition~\ref{prop:downstream}(iv) apply), a real MDS embedding
with distortion $\eta$ measured pairwise, and a real random-hyperplane LSH index with recall miss
$r$ measured against true nearest neighbors (full protocol, Appendix~\ref{app:protocol}). Across
the eight-configuration grid, mean distortion is $\eta=0.107$ and coverage gap averages
$\rho=1.16$ under $\dinit=0$, shrinking to $0.73$ (ratio $0.63$) under $\dinit=d_R$, as predicted.

The per-step-only conjecture is already refuted unconditionally by Proposition~\ref{prop:rho-lower}(i)
with $\eta=r=0$; Table~\ref{tab:full} audits the conventional quantity $(\eta+Lr)/(1-\gamma)$ on
eight configurations ($L=1$, $\gamma=0.7$). Two rows ($|\Scal|=10,k'=6$) violate it (naive
$0.803,1.061$ against realized error $1.10,1.14$, since $13$ and $10$ of $45$ pairs are never
updated); across the full $80$-run grid it fails in $11/80$ ($13.75\%$), while the corrected bound,
the exact identity $\norm{\dann-d}_\infty=\rho$, and the sandwich enclosure hold in \emph{every}
run, the identity to machine precision, covered error never exceeding $\gamma\rho$ (largest row
mean $0.33$). Figures~\ref{fig:coverage}--\ref{fig:bound} (Appendix~\ref{app:boundaudit}) plot
these values from the table below.

\begin{table*}[t]\centering\small
\setlength{\tabcolsep}{3.3pt}
\begin{tabular}{rrrrrrrrrrrr}
\toprule
$|\Scal|$ & $k'$ & pl. & $\eta$ & $r$ & $\rho$ & cov. err & glob. err & never & work & naive & corrected \\
\midrule
$10$ & $3$ & $2$ & $.0047$ & $.5000$ & $1.16$ & $.33$ & $1.16$ & $25/45$ & $.30$ & $1.682$ & $1.682$ \\
$10$ & $3$ & $6$ & $.0162$ & $.3333$ & $1.17$ & $.33$ & $1.17$ & $28/45$ & $.30$ & $1.165$ & $1.170$ \\
$10$ & $6$ & $2$ & $.0241$ & $.2167$ & $1.10$ & $.27$ & $1.10$ & $13/45$ & $.60$ & $.803$ & $1.100$ \\
$10$ & $6$ & $6$ & $.0184$ & $.3000$ & $1.14$ & $.29$ & $1.14$ & $10/45$ & $.60$ & $1.061$ & $1.140$ \\
$18$ & $3$ & $2$ & $.1983$ & $.4815$ & $1.18$ & $.30$ & $1.18$ & $121/153$ & $.17$ & $2.266$ & $2.266$ \\
$18$ & $3$ & $6$ & $.1992$ & $.4259$ & $1.18$ & $.30$ & $1.18$ & $118/153$ & $.17$ & $2.084$ & $2.084$ \\
$18$ & $6$ & $2$ & $.1974$ & $.5370$ & $1.17$ & $.31$ & $1.17$ & $80/153$ & $.33$ & $2.448$ & $2.448$ \\
$18$ & $6$ & $6$ & $.1983$ & $.4815$ & $1.17$ & $.31$ & $1.17$ & $91/153$ & $.33$ & $2.266$ & $2.266$ \\
\bottomrule
\end{tabular}
\caption{Synchronized exact-operator audit ($10$ seeds per configuration, $80$ runs). ``pl.'' is
the number of LSH hyperplanes; ``never'' counts uncovered unordered pairs; ``work'' is $k'/|\Scal|$;
``naive'' is $(\eta+Lr)/(1-\gamma)$ with $L=1$ and $\gamma=0.7$; and ``corrected'' is
$\max\{\rho,\text{naive}\}$. The two $|\Scal|=10,k'=6$ rows violate the naive expression but
satisfy the corrected bound. Global error equals $\rho$ in every row, and covered error remains
below $\gamma\rho$. Figures~\ref{fig:coverage} and~\ref{fig:bound} use this exact row order.}
\label{tab:full}
\end{table*}

The restricted sweep's work ratio $k'/|\Scal|$ ranges $0.17$--$0.60$ (Table~\ref{tab:full}),
matching Theorem~\ref{thm:complexity}; on these small instances the exact LP dominates runtime, so
operation count, not wall clock, is relevant. A dedicated timing study isolates pair-visit scaling
directly (Table~\ref{tab:wallclock}): cheap ground cost across $|\Scal|\in\{50,\dots,1600\}$ gives
log-log exponents $1.91$ exact versus $1.41$ ANN, speedup $1.2\times\to7.2\times$
(Figure~\ref{fig:wallclock}, Appendix~\ref{app:complexity}); the full exact-Wasserstein operator
across $|\Scal|\in\{50,\dots,400\}$ (union-support LP, Lemma~\ref{lem:support}, gap $0.0$) gives
exponents $2.02$ versus $1.00$, speedup $3.1\times\to25.5\times$, covered-pair error bounded by
$\gamma\rho$ throughout (realized $\le0.006$), as Corollary~\ref{cor:identity} predicts.

\begin{table}[t]\centering\small
\begin{tabular}{llrrrr}
\toprule
setting & $|\Scal|$ & exact (s) & ANN (s) & speedup & cov.\ err\\
\midrule
\multirow{4}{*}{cheap cost}
 & $50$   & 0.0012 & 0.0010 & $1.2\times$ & --\\
 & $200$  & 0.0146 & 0.0040 & $3.7\times$ & --\\
 & $800$  & 0.2103 & 0.0384 & $5.5\times$ & --\\
 & $1600$ & 0.8616 & 0.1192 & $7.2\times$ & --\\
\midrule
\multirow{4}{*}{full $W_1$}
 & $50$  & 13.05  & 4.24  & $3.1\times$ & 0.004\\
 & $100$ & 53.07  & 8.59  & $6.2\times$ & 0.005\\
 & $200$ & 212.64 & 17.06 & $12.5\times$ & 0.006\\
 & $400$ & 871.83 & 34.20 & $25.5\times$ & 0.006\\
\bottomrule
\end{tabular}
\caption{Wall-clock per-sweep cost, exact all-pairs versus ANN top-$k'$. ``cheap cost'' uses a fixed
shared ground cost so the pair-visit count alone drives runtime; ``full $W_1$'' solves the exact
Kantorovich transport LP per pair-action on the union of supports, identical to the dense LP by
Lemma~\ref{lem:support} (verified numerically, gap $0.0$), with $k'=8$. Covered-pair error is
bounded by $\gamma\rho$ throughout (Theorem~\ref{thm:error}(D)).}
\label{tab:wallclock}
\end{table}

On the same eight configurations, the two-arm sandwich holds ($f_t^{-}\le d\le f_t^{+}$ in $80/80$
runs), both one-sided identities hold to machine precision, and $\Pi^{+}_\tau=\Pi^{-}_\tau$ is rare
at a single build ($9/240$, $3.7\%$) but reached under re-seeding in two of three seeds within $10$
rounds (Appendix~\ref{app:reseed}). The full pipeline (re-seeding every $E=3$ sweeps, $32$
configurations) matches or beats the one-shot variant in $32/32$ runs (mean error $0.22$ vs
$1.18$), tracking $\max(\rho_t,e_t)$ exactly as Corollary~\ref{cor:identity} predicts
(Appendix~\ref{app:reseed}). A budget-matched random-coverage baseline gives a nearly identical
coverage gap to ANN ($\rho$ ratio $1.04$), confirming $\rho$ is a property of the budget, not the
index; with the upper arm, ANN's near-neighbor bias still cuts mean-pair error $2.5\times$ over
random ($0.54\to0.21$). At larger scale ($|\Scal|\in\{50,100,200\}$, Appendix~\ref{app:scale}) the
identity again holds in $30/30$ runs, while the naive bound, never violated there, is merely
\emph{vacuous}: with fixed embedding dimension, $\eta$ grows with $|\Scal|$, pushing the naive
expression past the metric diameter. At $|\Scal|=120$ (Appendix~\ref{app:scale}), the upper-arm ANN
metric matches exact-metric aggregation statistically (loss $0.022\pm0.008$ against
$0.022\pm0.008$, mean $12.8$ clusters against $12$), while under-estimate and reward-only baselines
collapse to one cluster (loss $1.47\pm0.55$).

On Gymnasium Taxi-v3 ($|\Scal|=500$, deterministic, $\gamma=0.9$), the post-hoc lower-arm identity
gives $\rho=180.4$, close to the exact diameter $200.5$, unavailable at run time; the observable
sandwich stays wide (coverage only $5\%$--$20\%$), so the conservative upper arm certifies no
unsupported merges and the pipeline abstains, while an uncertified diagnostic on the raw embedding
collapses to one cluster (loss $45.9$ against $15.8$ exact; exact computation takes $0.7$s). At a
scale where the exact sweep is genuinely prohibitive ($|\Scal|=2500$,
$\binom{2500}{2}\approx3.12$M pairs, $25$ planted rooms, learned $\mathbb{R}^8$ embedding), $8$
upper-freeze sweeps at $k'=20$ ($12.8\%$ of one sweep, $\approx1200$s) recover $254$--$284$
over-fragmented clusters with mean loss $2.27$ against a reward-only baseline's $3.18$, a $28.6\%$
improvement (full detail, Appendix~\ref{app:taxigrid}).

\paragraph{Comparison to learned surrogates (MICo, DBC).}
MICo \citep{castro2021mico} and DBC \citep{zhang2021dbc} are trained independently with their
published objectives on grouped MDPs ($|\Scal|=64$, eight planted groups, $|\Acal|=3$,
$\gamma=0.9$); our method uses their encoder only as an ANN index, then applies the exact restricted
backup, so the comparison is independently trained surrogates versus exact-operator refinement,
not three methods sharing one distance.

Sweeping retrieval budget over four seeds per point (Table~\ref{tab:surrogates},
Appendix~\ref{app:surrogates}): at $k'=8$ ($25\%$ coverage) our loss is $0.056$, reaching the
exact-metric skyline ($0.013$) once coverage passes half ($k'\ge16$), while MICo and DBC stay
$22$--$33\times$ worse ($0.292$, $0.426$) at every budget, and the two-arm interval encloses the
exact metric in all $16$ runs, which neither surrogate provides.

The claim is benchmark-scoped: a larger check at $|\Scal|\in\{60,120\}$ shows the same coverage
percentage need not give the same loss across state counts and geometries (loss $1.08$ against a
$0.026$ skyline at $|\Scal|=120$, below the coverage that reaches it), and a finer sweep at fixed
$|\Scal|=60$ confirms the same half-of-pairs crossover ($53.5\%$) while MICo and DBC stay flat,
never enclosing the exact metric (full detail, Appendix~\ref{app:surrogates}).

\section{Discussion}\label{sec:disc}
\paragraph{Why the naive bound is so tempting, and so wrong.}
The clean bound $\varepsilon/(1-\gamma)$ mechanically combines per-step perturbations with the
contraction, correct exactly where the contraction acts (Theorem~\ref{thm:error}'s
$\eop/(1-\gamma)$ branch, collapsing to $\gamma\rho$ on covered pairs for the exact backup). It
forgets the operator never touches uncovered pairs: unlike the usual approximation-error story
where every coordinate updates imperfectly, here a $\Theta(|\Scal|^2)$ block is frozen, and
Corollary~\ref{cor:identity} shows that block does not merely bound the error, it \emph{is} the
error. Any scheme restricting a global fixed point to a sparse retrieved set must account for
un-retrieved coordinates separately from retrieved quality; a monotone two-sided enclosure
(Corollary~\ref{cor:sandwich}) is the generic repair wherever the operator is monotone with known
sub- and super-solutions.

\paragraph{What the guarantee does and does not say.}
Theorem~\ref{thm:error} bounds iterate and metric error against the exact metric, not downstream
value or policy error, though Proposition~\ref{prop:downstream} composes it with standard
abstraction certificates \citep{ferns2014bisimulation,li2006abstraction}. $\rho$ is
\emph{characterized} exactly (Corollary~\ref{cor:identity}) but \emph{not observable}, so the
pipeline exposes the sandwich width instead; deriving $\rho$ from index parameters under a
distributional embedding assumption remains open. The complexity result is a worst-case count under
an explicit query oracle $q(\cdot)$, inheriting the chosen index's recall-speed trade-off,
unoptimized here. Proposition~\ref{prop:rho-lower}'s adaptive side is deliberately modest:
$\Omega(|\Scal|)$ evaluations are necessary even adaptively, the oblivious $\Theta(|\Scal|^2)$
barrier is real for the index-first class, and the fully adaptive super-linear question is open.

\paragraph{The choice of index and the recall--coverage distinction.}
Recall $r$ is \emph{per-state}, driven down by a better index (more hyperplanes, a graph structure
\citep{malkov2020hnsw}, product quantization \citep{jegou2011pq}); coverage is \emph{global and
cumulative}: a top-$k'$ scheme proposes at most $k'$ per state, so even perfect recall covers only
$O(|\Scal|k')$ pairs, leaving a quadratic remainder frozen. Under the exact backup the separation
is total: Theorem~\ref{thm:error}(D) says index quality affects only coverage, so only the
retrieval policy, not the index, can fix the error floor (Corollary~\ref{cor:restore}). Our
verification fixes the index family (LSH) and varies granularity: recall miss ranges $0.54$ to
$0.22$ while global error stays pinned at $\rho\approx1.16$.

\paragraph{Relation to learned surrogates.}
Scalable methods \citep{castro2020scalable,castro2021mico,zhang2021dbc} replace the metric with a
learned distance, accepting it may not be the true one; we keep the exact operator and approximate
only the sweep, controlling error relative to the true metric. The two are complementary: a
learned embedding of the kind those methods produce is precisely the index we need, and feeding it
into Algorithm~\ref{alg:pipeline} recovers a metric certifiably (Corollary~\ref{cor:sandwich})
related to the true one, not merely empirically useful; as Taxi shows, the certificate also flags
uninformative embeddings at run time. The kernel perspectives of
\citet{castro2023kernel,zhang2021metric,lan2022generalization} suggest such embeddings are often
low-distortion in Corollary~\ref{cor:embed}'s sense, so cheap-embedding regimes are ones those
methods already occupy.

\paragraph{On-demand exactness degenerates to quadratic.}
The on-the-fly exact method of \citet{bacci2013computing,bacci2013onthefly} computes the metric on
a query set $Q$ via dependency closure (BFS from $Q$ through transition supports to convergence),
the natural \emph{adaptive} competitor of Proposition~\ref{prop:rho-lower}(ii). On sparse-support
MDPs ($|\Scal|\in\{50,100\}$, support $4$, $k'=12$) the closure saturates to $100\%$ of all pairs
from a few hundred queries, since random supports mix every pair in within a few BFS steps
($425$--$452$s versus $13$--$17$s for ANN, $25$--$33\times$). The trade is exactness for time:
on-demand is exact on $Q$, the ANN lower arm exactly $\rho$ (Corollary~\ref{cor:identity}); once a
transition graph mixes, as any connected tabular MDP does, the closure propagates everywhere and
on-demand exactness degenerates to the full quadratic sweep. Algorithm~\ref{alg:pipeline} escapes
this by \emph{not} chasing exactness on unvisited pairs, a price
Proposition~\ref{prop:rho-lower}(i)-(ii) shows unavoidable for index-first and (linearly) adaptive
selection alike, made visible by the certificate.

\section{Conclusion}\label{sec:conc}
Bisimulation metrics admit sub-quadratic, certificate-carrying approximation: a low-dimensional
index selects pairs, and the exact restricted operator runs from both sides. Coverage, not index
quality, bounds global error: skipped pairs set the anytime limit $\max(\rho,\eop/(1-\gamma))$, an
equality under exact backups, and the sandwich width certifies per-pair uncertainty and the covered
clustering. Lower bounds are quadratic for reward-oblivious index-first coverage,
$\Omega(|\Scal|)$ for fully adaptive evaluation (super-linear open). Experiments confirm identity
and enclosure in every seed, sub-quadratic wall-clock scaling (to $25.5\times$), MICo and DBC
$22$--$33\times$ above the skyline we reach at half coverage, abstention on Taxi's weak embedding,
and $28.6\%$ lower value loss at $|\Scal|=2500$ for $12.8\%$ of one sweep. 

\bibliography{references}

\clearpage
\newpage

\appendix
\setcounter{secnumdepth}{1}
\section*{Technical Appendix (Supplementary Material)}

\section{Complexity Details for Theorem~\ref{thm:complexity}}\label{app:complexity}

\paragraph{One-time costs.}
(i) \emph{Embedding.} Landmark MDS with $m_{\mathrm{lm}}=O(\mathrm{poly}(k))$ landmarks reads the
$m_{\mathrm{lm}}$ landmark columns of $\dinit$; each entry of $\dinit\in\{0,d_R\}$ is computable on
demand in $O(|\Acal|)$, so forming the landmark distances costs
$O(|\Scal|\,m_{\mathrm{lm}}\,|\Acal|)$ and the eigen-step and out-of-sample projections cost
$O(m_{\mathrm{lm}}^3+|\Scal|\,m_{\mathrm{lm}}k)$ \citep{cox2008mds}, i.e.\
$O(|\Scal|\,\mathrm{poly}(k))$ overall for fixed $|\Acal|$. Classical MDS from the full matrix is
$\Theta(|\Scal|^2)$ and is never invoked. (ii) \emph{Index build:} $\widetilde O(|\Scal|)$ for LSH
and standard graph indices \citep{charikar2002simhash,malkov2020hnsw,li2019annsurvey}.
(iii) \emph{Queries:} $|\Scal|$ queries at amortized $q(|\Scal|)$; for LSH,
$q(|\Scal|)=O(|\Scal|^{\rho_{\mathrm{LSH}}})$ with $\rho_{\mathrm{LSH}}<1$ set by the approximation
ratio \citep{indyk1998ann,andoni2008nearoptimal}; graph indices give polylogarithmic $q$
empirically but without worst-case guarantees \citep{malkov2020hnsw,aumuller2020annbenchmarks},
which is why the theorem is stated relative to an explicit oracle.

\paragraph{Per-sweep cost.}
$|\Ccal|\le|\Scal|k'$ exact backups, each solving $|\Acal|$ transport problems on the union of two
supports of size at most $m$ (Assumption~\ref{asm:support}), identical in value to the dense
program by Lemma~\ref{lem:support}; a network-simplex or specialized solver runs in
$c_m=O(m^3\log m)$ \citep{peyre2019computational}, independent of $|\Scal|$. The two-armed pipeline
doubles this constant.

\paragraph{Sweep count.}
By Theorem~\ref{thm:error}, the transient term is $\gamma^t\Delta_0$, so covered-set accuracy
$\epsilon$ beyond the coverage floor is reached at
$t\ge\log(\Delta_0/\epsilon)/\log(1/\gamma)$, and $\log(1/\gamma)\ge1-\gamma$ gives
$K=O\!\big(\log(\Delta_0/\epsilon)/(1-\gamma)\big)$; the upper arm obeys the same recursion with
$\Delta_0^{+}:=\norm{U-d}_\infty\le\Dg$.

\paragraph{Totals.}
Summing, the total time is
$O\!\big(|\Scal|(\mathrm{poly}(k)+q(|\Scal|))\big)+O\!\big(|\Scal|k'c_m
\log(\Delta_0/\epsilon)/(1-\gamma)\big)$: under LSH,
$\widetilde O(|\Scal|^{1+\rho_{\mathrm{LSH}}}+|\Scal|/(1-\gamma))$; under a polylogarithmic
oracle, $\widetilde O(|\Scal|/(1-\gamma))$. Both are sub-quadratic; the exact sweep costs
$\Theta(|\Scal|^2 c_m)$ per application. The re-seeded pipeline adds the one-time block once per
round: for $R$ rounds the total is
$\widetilde O\!\big(R\,|\Scal|\,q(|\Scal|)+R\,E\,|\Scal|k'c_m\big)$, sub-quadratic for
$R=o(|\Scal|/q(|\Scal|))$, while zero global width forces $R=\Omega(|\Scal|/k')$
(Corollary~\ref{cor:restore}(d)) and hence quadratic total work, consistent with
Proposition~\ref{prop:rho-lower}.

\begin{figure}[t]\centering
\includegraphics[width=\linewidth]{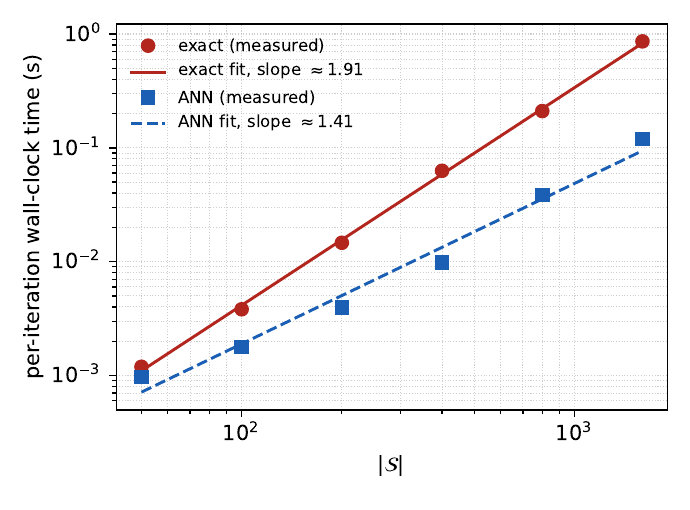}
\caption{Measured per-sweep wall-clock time against $|\Scal|$ on log--log axes, for the cheap
ground-cost rows of Table~\ref{tab:wallclock}. Fitted slopes are $1.91$ for the exact all-pairs
sweep and $1.41$ for the ANN top-$k'$ sweep, the quadratic versus sub-quadratic separation of
Theorem~\ref{thm:complexity}. Both sweeps share the identical inner kernel and differ only in which
pairs they visit, so the slope gap measures pair-visit scaling alone.}
\label{fig:wallclock}
\end{figure}

\begin{corollary}[Embedding does not dominate]\label{cor:embed}
With $O(\mathrm{poly}(k))$ landmarks, landmark MDS \citep{cox2008mds} reads
$O(|\Scal|\,\mathrm{poly}(k))$ entries of $\dinit$ (each computable on demand in $O(|\Acal|)$) and
costs $O(|\Scal|\,\mathrm{poly}(k))$ time, once; classical MDS from the full distance matrix would
cost $\Theta(|\Scal|^2)$ and is never used. When the bisimulation metric has low intrinsic
dimension, so a small $k$ achieves distortion $\eta=o(1)$
\citep{bourgain1985embedding,linial1995geometry}, the one-time embedding cost is dominated by the
query-and-backup cost of Theorem~\ref{thm:complexity}.
\end{corollary}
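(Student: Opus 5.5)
The proof is a cost count for landmark MDS in its two stages, followed by a comparison with the bound of Theorem~\ref{thm:complexity}.

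\emph{Stage 1: landmark distances.} Fix $m_{\mathrm{lm}}=O(\mathrm{poly}(k))$ landmarks $\ell_1,\dots,\ell_{m_{\mathrm{lm}}}$, chosen uniformly at random or by a linear-time max-min pass. Landmark MDS only needs the $|\Scal|\times m_{\mathrm{lm}}$ submatrix $\bigl(\dinit(s,\ell_j)\bigr)_{s,j}$. This is $O(|\Scal|\,\mathrm{poly}(k))$ entries.
\begin{itemize}
\item For $\dinit=0$ each entry costs $O(1)$.
\item For $\dinit=d_R$ each entry is $\max_a|R(s,a)-R(\ell_j,a)|$, so it costs $O(|\Acal|)$.
\end{itemize}
Forming the submatrix therefore costs $O(|\Scal|\,m_{\mathrm{lm}}\,|\Acal|)$. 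The entries are computed on demand, so no $|\Scal|^2$ table is ever materialized.

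\emph{Stage 2: eigendecomposition and projection.} Following \citet{cox2008mds}:
\begin{itemize}
\item Double-center the $m_{\mathrm{lm}}\times m_{\mathrm{lm}}$ landmark block and take its top-$k$ eigenpairs, in $O(m_{\mathrm{lm}}^3)$.
\item Place each remaining state by the distance-based triangulation formula. This is a $k\times m_{\mathrm{lm}}$ pseudo-inverse applied to the state's column of squared distances minus the landmark mean, costing $O(m_{\mathrm{lm}}k)$ per state.
\end{itemize}
The stage total is $O(m_{\mathrm{lm}}^3+|\Scal|\,m_{\mathrm{lm}}k)=O(|\Scal|\,\mathrm{poly}(k))$ for fixed $|\Acal|$. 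This is exactly the one-time embedding term that Theorem~\ref{thm:complexity} charges to line~\ref{line:embed}. By contrast, classical MDS must form and double-center all $\binom{|\Scal|}{2}$ entries, which is $\Theta(|\Scal|^2)$; since the algorithm calls only the landmark routine, that cost is never incurred.

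\emph{Domination.} Compare $O(|\Scal|\,\mathrm{poly}(k))$ with the query term $|\Scal|\,q(|\Scal|)$ and the backup term $|\Scal|k'c_m\log(\Delta_0/\epsilon)/(1-\gamma)$ from Theorem~\ref{thm:complexity}. Under the low-intrinsic-dimension hypothesis, a $k$ independent of $|\Scal|$ (or at most polylogarithmic in it) achieves $\eta=o(1)$ \citep{bourgain1985embedding,linial1995geometry}. Then $\mathrm{poly}(k)=\widetilde O(1)$, so the embedding cost is $\widetilde O(|\Scal|)$.
\begin{itemize}
\item Under LSH, $q(|\Scal|)=|\Scal|^{\rho_{\mathrm{LSH}}}$, and the query term strictly dominates.
\item Under a polylogarithmic oracle, the backup term carries an extra factor $k'c_m/(1-\gamma)\ge 1$ times a log factor, so the embedding cost is at most of the same order and is absorbed.
\end{itemize}

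\emph{Main obstacle.} The delicate point is interpreting ``low intrinsic dimension'' so that $k$ does not grow polynomially with $|\Scal|$. Bourgain's embedding alone needs $k=O(\log^2|\Scal|)$ and only gives multiplicative distortion. I would therefore state domination conditionally: whenever $\mathrm{poly}(k)=O(q(|\Scal|)+k'c_m/(1-\gamma))$. I would note that this condition holds for any $k=\mathrm{polylog}|\Scal|$.
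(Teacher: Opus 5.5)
Your proposal is correct and matches the paper's argument, which is the one-time-cost paragraph of Appendix~\ref{app:complexity}. That paragraph reads the landmark columns of $\dinit$ in $O(|\Scal|\,m_{\mathrm{lm}}\,|\Acal|)$, pays $O(m_{\mathrm{lm}}^3+|\Scal|\,m_{\mathrm{lm}}k)$ for the eigen-step and out-of-sample projections, never invokes classical MDS, and reads domination off the totals of Theorem~\ref{thm:complexity}. One caveat on your closing remark: the condition $\mathrm{poly}(k)=O\big(q(|\Scal|)+k'c_m/(1-\gamma)\big)$ fails for some $k=\mathrm{polylog}|\Scal|$ under a polylogarithmic oracle with constant $k'$, $c_m$, $\gamma$ (e.g.\ $q=O(\log|\Scal|)$ against $k=\log^2|\Scal|$), so there the embedding is absorbed only in the $\widetilde O$ sense, whereas for $k$ independent of $|\Scal|$ (the paper's ``small $k$'') domination holds outright, since the $|\Scal|$ queries alone cost $\Omega(|\Scal|)$.
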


\section{Proof of Theorem~\ref{thm:error} and Corollary~\ref{cor:identity}}\label{app:proof}

\paragraph{Setup.}
Let $\mathcal{P}$ be the unordered off-diagonal pairs of $\Scal$ and $\mathcal{B}(\mathcal{P})$ the
bounded symmetric functions on $\mathcal{P}$ (extended by $0$ on the diagonal) with
$\norm{f}_\infty=\max_{p\in\mathcal{P}}|f(p)|$. The operator $T$ is \eqref{eq:bisim}; recall
$\Dg=L/(1-\gamma)$ and $\norm{d}_\infty\le\Dg$ (from $d=\lim_n T^n0$ and
$\norm{T^n0}_\infty\le L\sum_{i<n}\gamma^i$).

\begin{lemma}[Basic properties of $T$]\label{lem:basic}
For all $f,g\in\mathcal{B}(\mathcal{P})$ and all distributions $\mu,\nu$ on $\Scal$:
(a) $|W_1^{f}(\mu,\nu)-W_1^{g}(\mu,\nu)|\le\norm{f-g}_\infty$;
(b) if $f\le g$ entrywise then $W_1^{f}(\mu,\nu)\le W_1^{g}(\mu,\nu)$;
(c) $\norm{Tf-Tg}_\infty\le\gamma\norm{f-g}_\infty$;
(d) $f\le g$ implies $Tf\le Tg$;
(e) $d_R\le Tf\le L+\gamma\norm{f}_\infty$ whenever $f\ge0$, where
$d_R(s,s')=\max_a|R(s,a)-R(s',a)|$;
(f) if $f\ge g-c$ entrywise for some $c\ge0$, then $Tf\ge Tg-\gamma c$.
\end{lemma}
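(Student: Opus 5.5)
The plan is to prove the six items in dependency order: the transport facts (a), (b) first, then the operator facts (c)--(f) from them. Throughout I use the primal (coupling) form
\[
W_1^{f}(\mu,\nu)=\min_{\pi\in\Pi(\mu,\nu)}\sum_{x,y}\pi(x,y)f(x,y),
\]
where $\Pi(\mu,\nu)$ is the set of couplings of $\mu$ and $\nu$. This is a minimum of linear functionals of $f$ over a fixed compact polytope, since the coupling set does not depend on the ground cost. The diagonal is set to $0$ for every $f\in\mathcal{B}(\mathcal{P})$, so all comparisons $f\le g$ or $|f-g|$ are over off-diagonal pairs and are trivially satisfied on the diagonal.

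For (b), fix an optimal coupling $\pi_g$ for $g$. Then
\[
W_1^{f}\le\sum\pi_g f\le\sum\pi_g g=W_1^{g},
\]
using only $\pi_g\ge0$. For (a), use the same optimal coupling $\pi_g$ and $\sum\pi_g=1$:
\[
W_1^{f}\le\sum\pi_g f\le\sum\pi_g g+\norm{f-g}_\infty=W_1^{g}+\norm{f-g}_\infty,
\]
and then swap the roles of $f$ and $g$. The general principle is that a pointwise minimum of $1$-Lipschitz, monotone functionals is itself $1$-Lipschitz and monotone. Part (f) is the one-sided version of the same argument. From $f\ge g-c$ and $\sum\pi_f=1$ we get
\[
W_1^{g}\le\sum\pi_f g\le\sum\pi_f f+c=W_1^{f}+c,
\]
so $W_1^{f}\ge W_1^{g}-c$.

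To lift these to $T$, note that for each action $a$ the bracket in \eqref{eq:bisim} is the constant $|R(s,a)-R(s',a)|$ plus $\gamma W_1^{(\cdot)}$. A maximum over finitely many actions preserves pointwise order. It also preserves additive perturbations, because $|\max_a x_a-\max_a y_a|\le\max_a|x_a-y_a|$. This gives each operator-level statement from its transport-level counterpart:
\begin{itemize}
\item (d) follows from (b) with no extra factor;
\item (c) follows from (a), picking up the factor $\gamma$;
\item (f) follows from the one-sided estimate above, again picking up the factor $\gamma$: $Tf\ge Tg-\gamma c$.
\end{itemize}
Symmetry of $Tf$ and the fact that it vanishes on the diagonal need a brief separate check. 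Swapping $s$ and $s'$ swaps $\mu$ and $\nu$, and a transposed coupling has the same cost because $f$ is symmetric. On the diagonal, the identity coupling gives $W_1^{f}(\mu,\mu)=0$ because $f$ has zero diagonal.

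For (e), take $f\ge0$. Then $W_1^{f}\ge0$, so each bracket in \eqref{eq:bisim} is at least $|R(s,a)-R(s',a)|$, and maximizing over $a$ gives $Tf\ge d_R$. For the upper bound, any coupling has cost at most $\norm{f}_\infty$, and rewards in $[0,L]$ give $|R(s,a)-R(s',a)|\le L$; together these yield $Tf\le L+\gamma\norm{f}_\infty$.

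The only delicate point I expect is making sure the optimum is attained, so that ``fix an optimal coupling'' is legitimate. This holds because $\Pi(\mu,\nu)$ is a nonempty compact polytope in finite dimension and the objective is linear, hence continuous. I do not anticipate a genuine obstacle beyond this.
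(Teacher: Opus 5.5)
Your proposal is correct and follows essentially the same route as the paper. A fixed optimal coupling with unit mass gives (a), (b) and (f) at the transport level, and the monotonicity and nonexpansiveness of $\max_a$ lift these to $T$ with the factor $\gamma$; your direct one-sided bound for (f) is just the paper's chain $W_1^{f}\ge W_1^{g-c}\ge W_1^{g}-c$ collapsed into one step.

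One small aside: the identity coupling only gives $W_1^{f}(\mu,\mu)\le 0$ unless $f\ge 0$. That diagonal check is unnecessary anyway, since $Tf$ is only evaluated on $\mathcal{P}$ and extended by $0$ on the diagonal.
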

\begin{proof}
(a) Let $\pi$ be an optimal coupling for $W_1^{g}$. Its cost under $f$ exceeds its cost under $g$ by
at most $\norm{f-g}_\infty$ because $\pi$ has unit mass; hence
$W_1^{f}\le W_1^{g}+\norm{f-g}_\infty$, and symmetrically. (b) Every coupling costs no more under
$f$ than under $g$; take the infimum. (c) The reward term is cost-independent, $W_1^{\cdot}$ moves
by at most $\norm{f-g}_\infty$ by (a), and $\max_a$ is nonexpansive. (d) By (b) and monotonicity of
$\max_a$. (e) $W_1^{f}\ge0$ gives the lower bound; $W_1^{f}\le\norm{f}_\infty$ (unit mass, entries
bounded) gives the upper. (f) Apply (b) with $g-c\le f$, then (a)-style shift:
$W_1^{f}\ge W_1^{g-c}\ge W_1^{g}-c$, and the reward term is unchanged.
\end{proof}

\begin{lemma}[Union-support transport]\label{lem:support}
For distributions $\mu,\nu$ with supports $A,B\subseteq\Scal$ and any ground cost $c$, the
Kantorovich LP restricted to $A\times B$ has the same value as the LP over
$\Scal\times\Scal$.
\end{lemma}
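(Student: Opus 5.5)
The plan is to show the two linear programs have the same feasible set up to zero-padding, so their optimal values coincide. Write the dense Kantorovich LP as minimizing $\sum_{x,y\in\Scal}c(x,y)\pi(x,y)$ over nonnegative $\pi$ on $\Scal\times\Scal$ with row marginals $\mu$ and column marginals $\nu$. The restricted LP is the same problem with the additional constraint that $\pi$ vanishes outside $A\times B$. Equivalently, it uses only the variables indexed by $A\times B$, with marginal constraints $\sum_{y\in B}\pi(x,y)=\mu(x)$ for $x\in A$ and $\sum_{x\in A}\pi(x,y)=\nu(y)$ for $y\in B$.

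\emph{Step 1: restricted value $\ge$ dense value.} Take any feasible restricted plan and extend it by zero to all of $\Scal\times\Scal$. For $x\notin A$ the row sum is $0=\mu(x)$, and for $y\notin B$ the column sum is $0=\nu(y)$, so the extension is feasible for the dense LP. It has identical cost because the added entries are zero. Hence the dense minimum is at most the restricted minimum.

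\emph{Step 2: every dense-feasible plan is already supported on $A\times B$.} Let $\pi$ be feasible for the dense LP. For $x\notin A$ we have $\sum_y\pi(x,y)=\mu(x)=0$, and since every entry is nonnegative, $\pi(x,y)=0$ for all $y$. The same argument with columns shows $\pi(x,y)=0$ whenever $y\notin B$. So $\pi$ vanishes off $A\times B$, and its restriction to $A\times B$ is feasible for the restricted LP with the same cost. Hence the restricted minimum is at most the dense minimum.

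Together, Steps 1 and 2 put the two feasible sets in cost-preserving bijection, which gives equality of values. Both minima are attained because each feasible set is a nonempty compact polytope (the product coupling $\mu\otimes\nu$ is feasible).

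The argument never uses any property of $c$: it need not be a metric, need not be symmetric, and may be the current iterate $f$. This is what lets the lemma apply to $W_1^{f}$ at every sweep, and to either the $A\times B$ or the $(A\cup B)^2$ formulation, since the second contains $A\times B$ and is sandwiched between the two programs. There is no real obstacle here. The only point needing care is Step 2's use of nonnegativity together with the zero marginals outside the supports to force the off-support entries to vanish.
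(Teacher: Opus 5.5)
Your proof is correct and follows the paper's own proof. The paper compresses it to one line: every coupling of $(\mu,\nu)$ places mass only on $A\times B$, which is your Step 2 via nonnegativity and zero marginals off the supports, so the two feasible sets coincide after deleting identically-zero variables, which is your zero-padding Step 1.
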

\begin{proof}
Every coupling of $(\mu,\nu)$ places mass only on $A\times B$, so the two feasible sets coincide
after deleting identically-zero variables.
\end{proof}

\paragraph{The restricted iteration and its boundedness.}
Fix the persistent covered set $\Ccal$, the initialization $\dinit\le d$, and a covered backup $B$
satisfying Assumption~\ref{asm:backup}. The iterates are $f_0=\dinit$ and
\[
f_{t+1}(p)=
\begin{cases}
Bf_t(p), & p\in\Ccal,\\
\dinit(p), & p\notin\Ccal.
\end{cases}
\]
By Lemma~\ref{lem:basic}(e) and $|Bf-Tf|\le\eop$,
$\norm{f_{t+1}}_\infty\le\max\big(\norm{\dinit}_\infty,\,L+\gamma\norm{f_t}_\infty+\eop\big)$; the
ball $\norm{f}_\infty\le(L+\eop)/(1-\gamma)=\Dg+\eop/(1-\gamma)$ is invariant and contains
$f_0$ (as $0\le\dinit\le d\le\Dg$), so Assumption~\ref{asm:backup} applies at every iterate.

\begin{lemma}[Well-posedness of the exact restricted iteration]\label{lem:wellposed}
Let $\mathcal{A}:=\{f\in\mathcal{B}(\mathcal{P}):f=\dinit\text{ off }\Ccal\}$ and let
$\hat T$ act as $T$ on $\Ccal$ and as the identity off $\Ccal$. Then $\hat T$ maps $\mathcal{A}$ to
itself and is a $\gamma$-contraction on $(\mathcal{A},\norm{\cdot}_\infty)$; it has a unique fixed
point $\dann\in\mathcal{A}$, and the exact iterates converge to it geometrically.
\end{lemma}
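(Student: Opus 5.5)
The plan is to check three things in order: invariance of $\mathcal{A}$, completeness of $\mathcal{A}$, and the contraction estimate. The Banach fixed-point theorem then gives the unique fixed point and geometric convergence.

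\emph{Invariance.} $\hat T f$ agrees with $f$ off $\Ccal$. On $\Ccal$ it takes the value $(Tf)(p)$, which is a real number by Lemma~\ref{lem:basic}(e): for $f\ge0$, $Tf\le L+\gamma\norm{f}_\infty<\infty$. For general bounded $f$ the same unit-mass argument gives $|Tf|\le L+\gamma\norm{f}_\infty$. Symmetry is preserved because $T$ is symmetric in $(s,s')$: swapping the two states swaps the marginals, and $W_1^f$ is invariant under that swap when the cost $f$ is symmetric. Hence $\hat T f\in\mathcal{A}$.

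\emph{Completeness.} $\mathcal{A}$ is the intersection of the finite-dimensional space $\mathcal{B}(\mathcal{P})$ with finitely many coordinate constraints $f(p)=\dinit(p)$ for $p\notin\Ccal$. It is therefore a closed affine subset of a complete space, and so is complete under $\norm{\cdot}_\infty$.

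\emph{Contraction.} For $f,g\in\mathcal{A}$, the difference $\hat Tf-\hat Tg$ vanishes off $\Ccal$, because both functions equal $\dinit$ there. On $\Ccal$ it equals $Tf-Tg$, which Lemma~\ref{lem:basic}(c) bounds by $\gamma\norm{f-g}_\infty$. Hence $\norm{\hat Tf-\hat Tg}_\infty\le\gamma\norm{f-g}_\infty$.

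\emph{Fixed point and rate.} Banach's theorem yields a unique fixed point $\dann\in\mathcal{A}$. The exact iterates of Theorem~\ref{thm:error} are $f_{t+1}=\hat T f_t$ with $f_0=\dinit\in\mathcal{A}$, so they satisfy $\norm{f_t-\dann}_\infty\le\gamma^t\norm{\dinit-\dann}_\infty$.

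The only step needing care is the contraction on $\Ccal$. Lemma~\ref{lem:basic}(c) is stated for the full operator on $\mathcal{B}(\mathcal{P})$, and $f,g$ are evaluated inside $W_1$ on all pairs, including uncovered ones. This causes no problem: $f$ and $g$ agree on uncovered pairs, so $\norm{f-g}_\infty$ is taken effectively over $\Ccal$, and the bound from Lemma~\ref{lem:basic}(a) applies without change.

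A minor point is the diagonal. Both functions are extended by $0$ on the diagonal, so the costs agree there and add nothing to the difference.
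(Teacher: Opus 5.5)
Your proposal is correct and follows the paper's argument: the difference $\hat Tf-\hat Tg$ vanishes off $\Ccal$, Lemma~\ref{lem:basic}(c) gives the $\gamma$-contraction on $\Ccal$ because $f$ and $g$ agree on the frozen boundary, and Banach's theorem on the closed set $\mathcal{A}$ finishes. Beyond the paper, you also check explicitly that $\hat T$ maps $\mathcal{A}$ to itself and that $\mathcal{A}$ is complete, two points the paper's one-line proof leaves implicit.
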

\begin{proof}
For $f,g\in\mathcal{A}$ the difference vanishes off $\Ccal$, and on $\Ccal$,
$|\hat Tf-\hat Tg|=|Tf-Tg|\le\gamma\norm{f-g}_\infty$ by Lemma~\ref{lem:basic}(c); note the
sup on the right runs over \emph{all} pairs, which is exactly why the frozen boundary must agree,
as it does within $\mathcal{A}$. Banach's theorem on the closed set $\mathcal{A}$ concludes.
\end{proof}

\paragraph{Proof of Theorem~\ref{thm:error}.}
\emph{Uncovered pairs.} For every $t$ and $p\notin\Ccal$, $f_t(p)=\dinit(p)$, so
$|f_t(p)-d(p)|=d(p)-\dinit(p)\in[0,\rho]$ with maximum exactly $\rho$ by definition
\eqref{eq:rho}.

\emph{Covered recursion.} Write $e_t=\max_{p\in\Ccal}|f_t(p)-d(p)|$. For $p\in\Ccal$,
\[
\begin{split}
|f_{t+1}(p)-d(p)|&=|Bf_t(p)-Td(p)|\\
&\le\underbrace{|Bf_t(p)-Tf_t(p)|}_{\le\,\eop}
+\underbrace{|Tf_t(p)-Td(p)|}_{\le\,\gamma\norm{f_t-d}_\infty},
\end{split}
\]
using Assumption~\ref{asm:backup} (legitimate by the boundedness invariant) and
Lemma~\ref{lem:basic}(c). Since $\norm{f_t-d}_\infty=\max(e_t,\max_{p\notin\Ccal}(d-\dinit)(p))
\le\max(e_t,\rho)$, we obtain $e_{t+1}\le\psi(e_t)$ with
$\psi(x):=\eop+\gamma\max(x,\rho)$.

\emph{Solving the recursion.} $\psi$ is nondecreasing and $\gamma$-Lipschitz on
$\mathbb{R}_{\ge0}$, hence has a unique fixed point $x^\star$. Case analysis: if
$x\ge\rho$, the fixed-point equation reads $x=\eop+\gamma x$, i.e.\ $x=\eop/(1-\gamma)$,
consistent iff $\eop/(1-\gamma)\ge\rho$; if $x\le\rho$ it reads $x=\eop+\gamma\rho$, consistent iff
$\eop+\gamma\rho\le\rho$, i.e.\ $\eop\le(1-\gamma)\rho$. The two consistency conditions are
complementary, and in both regimes
$x^\star=\max\big(\eop/(1-\gamma),\ \eop+\gamma\rho\big)$: when $\eop\ge(1-\gamma)\rho$,
$\eop/(1-\gamma)-(\eop+\gamma\rho)=\gamma\big(\eop/(1-\gamma)-\rho\big)\ge0$; otherwise the
inequality reverses. By monotonicity, $e_t\le\psi^t(e_0)$, and by the Lipschitz bound
$|\psi^t(e_0)-x^\star|\le\gamma^t|e_0-x^\star|$, so
$e_t\le x^\star+\gamma^t e_0\le x^\star+\gamma^t\Delta_0$ (as
$e_0=\max_{\Ccal}|\dinit-d|\le\Delta_0$). This is (A).

\emph{Global bound.} $\norm{f_t-d}_\infty=\max(e_t,\rho)\le\max(x^\star,\rho)+\gamma^t\Delta_0$.
If $\eop\le(1-\gamma)\rho$ then $\eop+\gamma\rho\le\rho$ and $\eop/(1-\gamma)\le\rho$, so
$\max(x^\star,\rho)=\rho$; otherwise $\eop+\gamma\rho<\eop/(1-\gamma)$ and
$\max(x^\star,\rho)=\eop/(1-\gamma)$. In both cases
$\max(x^\star,\rho)=\max\big(\rho,\ \eop/(1-\gamma)\big)$, giving (B); the limit-point statement
follows by letting $t\to\infty$ along a convergent subsequence. If $\Ccal=\Scal\times\Scal$ the
uncovered maximum is over the empty set, $\rho=0$ by convention, and the recursion is
$e_{t+1}\le\eop+\gamma e_t$, giving (C).

\emph{Exact backup (D).} Here $B=T$ on $\Ccal$, so the iteration is $\hat T$ of
Lemma~\ref{lem:wellposed}. $\hat T$ is monotone (Lemma~\ref{lem:basic}(d) on $\Ccal$; identity off
$\Ccal$). Sub-solution: for $\dinit\in\{0,d_R\}$, Lemma~\ref{lem:basic}(e) gives
$T\dinit\ge d_R\ge\dinit$ on $\Ccal$, so $f_1\ge f_0$, and monotonicity propagates
$f_{t+1}\ge f_t$ for all $t$. Upper invariance: if $f\le d$ then $\hat Tf\le\hat Td\le d$
(on $\Ccal$, $Tf\le Td=d$; off $\Ccal$, $\dinit\le d$), so $f_t\le d$ throughout. A monotone
bounded sequence on a finite pair set converges pointwise; the limit lies in $\mathcal{A}$, is a
fixed point of $\hat T$ by continuity, and equals the unique $\dann$ of
Lemma~\ref{lem:wellposed}, with $\dinit\le\dann\le d$. Finally, for $p\in\Ccal$,
\[
\begin{split}
d(p)-\dann(p)&=Td(p)-T\dann(p)\le\gamma\norm{d-\dann}_\infty\\
&=\gamma\max\big(e_\infty,\ \max_{q\notin\Ccal}(d-\dinit)(q)\big)\\
&\le\gamma\max(e_\infty,\rho).
\end{split}
\]
If the inner maximum were $e_\infty$ we would get $e_\infty\le\gamma e_\infty$, i.e.\
$e_\infty=0\le\gamma\rho$; otherwise $e_\infty\le\gamma\rho$ directly. Either way
$e_\infty\le\gamma\rho$, which is (D). \hfill$\qed$

\paragraph{Proof of Corollary~\ref{cor:identity}.}
Under (D), $\dann=\dinit$ off $\Ccal$, so the uncovered error attains its maximum $\rho$ at the
argmax pair of \eqref{eq:rho}; the covered error is at most $\gamma\rho\le\rho$. Hence
$\norm{\dann-d}_\infty=\max(e_\infty,\rho)=\rho$ (and $=0$ when $\Ccal$ covers all pairs).
\hfill$\qed$

\begin{lemma}[Index-side instantiation]\label{lem:inst}
Suppose the covered backup evaluates \eqref{eq:bisim} with each ground-cost entry perturbed by at
most $\eta$ (e.g.\ read off the embedding) and with a coupling that omits at most an $r$-fraction of
probability mass, re-routed at per-unit cost error at most $\Dg$. Then Assumption~\ref{asm:backup}
holds with $\eop\le\gamma(\eta+r\Dg)$.
\end{lemma}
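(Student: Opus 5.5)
The bound follows by decomposing, at a fixed covered pair $p=(s,s')\in\Ccal$ and symmetric $f$ in the ball of Assumption~\ref{asm:backup}, the difference between the index-side backup $Bf(p)$ and the exact $Tf(p)$ action by action. Since both share the reward term $|R(s,a)-R(s',a)|$ and $\max_a$ is nonexpansive, $|Bf(p)-Tf(p)|\le\gamma\max_a|\widetilde W_a-W_1^{f}(\mu_a,\nu_a)|$, where $\mu_a=P(\cdot\mid s,a)$, $\nu_a=P(\cdot\mid s',a)$, and $\widetilde W_a$ is the transport value the index-side backup actually reports. It therefore suffices to show $|\widetilde W_a-W_1^{f}(\mu_a,\nu_a)|\le\eta+r\Dg$ for each $a$.

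We would introduce an intermediate quantity, $W_1^{\tilde f}(\mu_a,\nu_a)$, where $\tilde f$ is the perturbed ground cost with $\norm{\tilde f-f}_\infty\le\eta$. Lemma~\ref{lem:basic}(a) gives $|W_1^{\tilde f}-W_1^{f}|\le\eta$ directly, since couplings have unit mass. For the second error, the truncated coupling, we would write the reported value as the cost under $\tilde f$ of a plan $\tilde\pi$ that agrees with some coupling except on a sub-measure of mass at most $r$. That mass is re-routed, and each unit of it incurs cost error at most $\Dg$, so the reported cost of $\tilde\pi$ differs from the cost of a genuine coupling $\pi$ by at most $r\Dg$. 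Applying this with $\pi$ the optimal $\tilde f$-coupling gives $\widetilde W_a\le W_1^{\tilde f}+r\Dg$. Applying it in the reverse direction, where the completed plan is a feasible coupling and so costs at least $W_1^{\tilde f}$, gives $\widetilde W_a\ge W_1^{\tilde f}-r\Dg$. The triangle inequality then yields $\eta+r\Dg$.

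The main obstacle is making the ``omits at most an $r$-fraction, re-routed at per-unit error $\Dg$'' hypothesis precise enough to give both directions of the inequality. The lower direction is the delicate one: a truncated plan could undercut the optimum unless re-routing restores a valid coupling. We would handle this by interpreting the hypothesis as saying that the reported value equals the $\tilde f$-cost of some feasible coupling, up to an additive error of $r\Dg$. We would also check that entries of $f$ lie in $[0,\Dg+\eop/(1-\gamma)]$, so a per-unit error of $\Dg$ is the natural scale. Finally, we note that the result is stated as holding for every $f$ in the ball, so the iterate invariance argued before Lemma~\ref{lem:wellposed} lets Theorem~\ref{thm:error} consume it with $\eop=\gamma(\eta+r\Dg)$.
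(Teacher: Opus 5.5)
Your proposal follows the paper's proof: pass through the perturbed-cost value $W_1^{\tilde f}$, bound the cost perturbation by $\eta$ via the unit-mass argument of Lemma~\ref{lem:basic}(a), bound the omitted-and-re-routed mass by $r\Dg$, and finish with the exact reward term, nonexpansiveness of $\max_a$, and the factor $\gamma$. Your extra care about the two directions goes beyond the paper's terse assertion. Note, though, that the upper direction is the one that needs the truncated plan to come from the \emph{optimal} $\tilde f$-coupling; under that reading both directions are immediate, whereas feasibility of the completed plan alone yields only the lower bound.
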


\paragraph{Proof of Lemma~\ref{lem:inst}.}
Fix $f$ in the invariant ball and a covered pair. Perturbing each ground-cost entry by at most
$\eta$ moves the transport value by at most $\eta$ (the argument of Lemma~\ref{lem:basic}(a) applied
to the perturbed cost). Omitting an $r$-fraction of mass and re-routing it at per-unit cost error at
most $\Dg$ moves the value by at most $r\Dg$. The reward term is exact and $\max_a$ is nonexpansive,
so the backup differs from $Tf$ by at most $\gamma(\eta+r\Dg)$, i.e.\
Assumption~\ref{asm:backup} holds with $\eop\le\gamma(\eta+r\Dg)$. \hfill$\qed$

\paragraph{Why the naive bound fails.}
Any per-step bound $\varepsilon/(1-\gamma)$ silently assumes $\Ccal=\Scal\times\Scal$, i.e.\ that
the contraction acts on every coordinate. A top-$k'$ index covers $O(|\Scal|k')$ of the
$\binom{|\Scal|}{2}$ pairs; without re-seeding the complement is $\Theta(|\Scal|^2)$ and frozen, so
$\rho>0$ generically and, by Corollary~\ref{cor:identity}, the realized error is exactly $\rho$,
however small $\varepsilon$ is. Proposition~\ref{prop:rho-lower}(i) makes this unconditional with a
construction on which every per-step quantity is zero. The verification of \S\ref{sec:exp} exhibits
the failure on generic instances under the conventional accounting
(e.g.\ at $|\Scal|=10$, $k'=6$, coverage gap $\rho\approx1.10$ against per-step term
$\approx0.80$).

\section{Proof of Proposition~\ref{prop:rho-lower}}\label{app:rho-lower}

Throughout, all transition kernels are identical across states and instances (say
$P(\cdot\mid s,a)=\mathrm{Unif}(\Scal)$), so for any ground cost the transported cost between any
two states' next-state distributions is $0$; the pair-evaluation answers are therefore the
per-action reward differences together with zeros, whatever transport convention the model adopts,
and the exact metric is $d(s,t)=\max_a|R(s,a)-R(t,a)|$. All constructions place rewards in
$[0,\rho_0]\subseteq[0,L]$ after adding the constant $\rho_0/2$, which changes no difference.

\paragraph{Part (i): oblivious coverage.}
Given an unordered pair $(i,j)$ and a sign $b\in\{+1,-1\}$, define the instance $M_{i,j,b}$ by
$R(x,a)=0$ for $x\notin\{i,j\}$, $R(i,a)=\rho_0/2$, and $R(j,a)=b\,\rho_0/2$, for every $a$. Then
$d(i,j)=\rho_0\cdot\mathbf{1}[b=-1]$, while for every other pair the answers are
$b$-independent: $|R(i,a)-R(x,a)|=|R(j,a)-R(x,a)|=\rho_0/2$ and $|R(x,a)-R(y,a)|=0$ for
$x,y\notin\{i,j\}$.

Let the coverage rule select $Q$ without access to $R$ (obliviousness); its distribution is
therefore independent of $(i,j,b)$. Deterministic case: $|Q|=o(|\Scal|^2)<\binom{|\Scal|}{2}$, so
some pair $(i,j)\notin Q$; plant there. Every evaluated quantity is identical in $M_{i,j,+1}$ and
$M_{i,j,-1}$, so any output computable from the evaluations (including any embedding built from
them and any value interpolated in that embedding) takes the same value $v$ for $(i,j)$ in both
instances, and $\max_b|v-d_b(i,j)|\ge\big(|v-0|+|v-\rho_0|\big)/2\ge\rho_0/2$. Randomized case:
draw $(i,j)$ uniformly from all pairs and $b$ uniformly, independently of the rule's randomness;
$\Pr[(i,j)\in Q]\le\E|Q|/\binom{|\Scal|}{2}=o(1)$, and conditioned on $(i,j)\notin Q$ the transcript
is $b$-independent, so
$\E\norm{\dann-d}_\infty\ge\E\big[\,|v_{(i,j)}-d_b(i,j)|\;\big|\;(i,j)\notin Q\big]
\Pr[(i,j)\notin Q]\ge(1-o(1))\,\rho_0/2$.

\emph{Why obliviousness is needed.} A reward-adaptive prober defeats any single planted pair:
evaluating $(1,x)$ for all $x$ returns $\rho_0/2$ exactly when $x\in\{i,j\}$ (or reveals
$1\in\{i,j\}$), so $O(|\Scal|)$ evaluations locate the pair, and one more reads it. Part (ii)
therefore hides $\Theta(|\Scal|)$ independent bits, each readable only at its own pair.

\paragraph{Part (ii): adaptive coverage.}
Let $|\Scal|=n$ be even, group states into partner pairs $P_\ell=\{2\ell-1,2\ell\}$ for
$\ell\le n/2$, and take $|\Acal|=n/2$ actions $a_1,\dots,a_{n/2}$. Draw independent uniform bits
$s_\ell\in\{+1,-1\}$ and set $R(2\ell-1,a_\ell)=\rho_0/2$, $R(2\ell,a_\ell)=s_\ell\rho_0/2$, and
$R(x,a_m)=0$ otherwise. The evaluation of a partner pair $P_\ell$ returns, in coordinate $a_\ell$,
$\tfrac{\rho_0}{2}|1-s_\ell|\in\{0,\rho_0\}$, revealing $s_\ell$; hence
$d(P_\ell)\in\{0,\rho_0\}$ according to $s_\ell$. The evaluation of any non-partner pair
$\{u,v\}$ returns, in each coordinate $a_m$, the value $\rho_0/2$ if exactly one of $u,v$ lies in
$P_m$ and $0$ otherwise, independent of every bit, since $|s_m\rho_0/2-0|=\rho_0/2$ regardless
of $s_m$.

Consequently, after any sequence of evaluations, the transcript is a deterministic function of the
queries and of the bits of the partner pairs evaluated so far; by the principle of deferred
decisions, conditioned on the transcript $\tau$, the bits of unevaluated partner pairs remain
independent and uniform. Suppose the algorithm (adaptive; randomized, by additionally conditioning
on its random string) performs $Q<n/2$ evaluations. At most $Q$ partner pairs are evaluated, so the
transcript-measurable set of unevaluated partner pairs is nonempty; let $p^\star(\tau)$ be the one
of smallest index. The output value $v_{p^\star}$ is $\tau$-measurable, while
$d(p^\star)\in\{0,\rho_0\}$ is uniform given $\tau$, so
$\E\big[\norm{\dann-d}_\infty\,\big|\,\tau\big]\ge
\E\big[\,|v_{p^\star}-d(p^\star)|\,\big|\,\tau\big]\ge\rho_0/2$; taking expectation over $\tau$
gives the claim. \hfill$\qed$

\begin{remark}
The construction of (ii) uses $|\Acal|=\Theta(|\Scal|)$ actions; with $|\Acal|$ constant it hides
$|\Acal|$ bits and yields an $\Omega(\min(|\Acal|,|\Scal|/2))$ adaptive bound. Evaluating all
$\binom{|\Scal|}{2}$ pairs reveals $d$ exactly on this family, consistent with
Theorem~\ref{thm:error}(C); whether adaptive algorithms require $\omega(|\Scal|)$ evaluations on
some family is open. The indistinguishability in (i) also proves the certificate-tightness remark
in Proposition~\ref{prop:rho-lower}: any computable interval that were narrower than
$[0,\rho_0]$ at the planted pair would exclude the true value in one of the two instances, so the
sandwich's full-width report on uncovered pairs is not conservatism but necessity.
\end{remark}

\begin{table}[t]
\centering\small
\begin{tabular}{lrrrr}
\toprule
planted gap $\rho_0$ & $0.1$ & $0.4$ & $0.8$ & $1.2$ \\
\midrule
realized global error & $0.1$ & $0.4$ & $0.8$ & $1.2$ \\
\bottomrule
\end{tabular}
\caption{Matching lower bound, construction (i) of Proposition~\ref{prop:rho-lower}
(reward-oblivious coverage; correlation between planted gap and realized error is $1.0$). The
realized global error of a bounded-coverage oblivious algorithm equals the planted hidden-pair
distance $\rho_0$ exactly; the hidden pair uses identical kernels, so embedding distortion and
recall are irrelevant by construction. At fixed budget $k'=8$ the error is independent of $|\Scal|$
as coverage $\to0$ (falling from $42\%$ at $|\Scal|=20$ to $5\%$ at $|\Scal|=160$), and full
coverage drives it to $0$.}
\label{tab:rho-lower}
\end{table}

\section{The Upper Arm and the Sandwich: Proofs and Verification}\label{app:downstream}

\paragraph{Mechanism.}
Under the lower arm's under-estimate freeze, an un-retrieved pair reads as its initialization
(for $\dinit=0$, as \emph{identical}), so a threshold rule merges states that were never compared;
worse, the $W_1$ backup at a covered pair transports against a cost matrix whose un-retrieved
entries are under-estimates, dragging covered values down as well (Theorem~\ref{thm:error}(D)
bounds, but does not eliminate, this one-sided effect). Both pressures push distinct states into
one cluster. Freezing un-retrieved pairs at an upper bound reverses the sign of every error, which
is what makes aggregation safe and the two-armed enclosure possible.

\begin{lemma}[Constant super-solution]\label{lem:super}
Let $U\equiv\Dg=L/(1-\gamma)$ off-diagonal ($0$ on the diagonal). Then $U\ge d$ pointwise and
$TU\le U$ on every pair.
\end{lemma}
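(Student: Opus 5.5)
The statement has two parts, $U\ge d$ and $TU\le U$, and I will prove them separately using only the diameter bound and Lemma~\ref{lem:basic}.

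\textbf{The bound $U\ge d$.} On the diagonal both sides vanish, because every function considered here has zero diagonal. Off the diagonal, the Setup paragraph already records $\norm{d}_\infty\le\Dg$. I would re-derive it in one line for completeness. By induction using Lemma~\ref{lem:basic}(e), $\norm{T^n0}_\infty\le L\sum_{i<n}\gamma^i\le L/(1-\gamma)$. Then $d=\lim_n T^n0$, and since the pair set is finite the limit is pointwise, so $d\le\Dg=U$ off the diagonal.

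\textbf{The bound $TU\le U$.} Fix an off-diagonal pair $(s,s')$ and an action $a$. The reward term satisfies $|R(s,a)-R(s',a)|\le L$ because $R$ takes values in $[0,L]$. For the transport term, $U\ge0$ entrywise with $\norm{U}_\infty=\Dg$. Any coupling has unit mass, so its cost under ground cost $U$ is at most $\Dg$, which gives $W_1^{U}\le\Dg$. This is exactly the upper half of Lemma~\ref{lem:basic}(e). Combining the two terms, $(TU)(s,s')\le L+\gamma\Dg=L+\gamma L/(1-\gamma)=L/(1-\gamma)=U(s,s')$. Taking the maximum over $a$ preserves the inequality.

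On the diagonal, I would note that $T$ is applied only to off-diagonal pairs and the diagonal stays at $0$ by convention. So $TU\le U$ holds trivially there; equivalently, $(TU)(s,s)=0$ because the identity coupling has zero cost under a cost with zero diagonal.

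\textbf{Main subtlety.} The only step needing care is that the ground cost $U$ has zero diagonal rather than being constant $\Dg$ everywhere. This only lowers transport costs relative to the constant function, and by Lemma~\ref{lem:basic}(b) that cannot break the upper bound. I expect no real obstacle here; the lemma's role is as the base case for the monotone decreasing upper iteration in Proposition~\ref{prop:downstream}(i).
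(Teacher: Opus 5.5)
Your proposal is correct and matches the paper's proof: the paper cites $d\le\Dg$ from the appendix setup, then bounds $W_1^{U}\le\norm{U}_\infty=\Dg$ by unit mass to get $TU\le L+\gamma\Dg=\Dg=U$. Your re-derivation of the diameter bound via $\norm{T^n0}_\infty\le L\sum_{i<n}\gamma^i$ and your diagonal check are valid details the paper leaves implicit.
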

\begin{proof}
$d\le\Dg$ pointwise (Appendix~\ref{app:proof}, setup). For the second claim,
$W_1^{U}(\mu,\nu)\le\norm{U}_\infty=\Dg$ for any $\mu,\nu$ (unit mass), so
$TU\le L+\gamma\Dg=\Dg=U$.
\end{proof}

\paragraph{Proof of Proposition~\ref{prop:downstream}.}
\emph{(i).} Initialize $f_0^{+}=U$ everywhere and freeze $f_t^{+}=U$ off $\Ccal$. By
Lemma~\ref{lem:super}, $f_1^{+}=\hat Tf_0^{+}\le f_0^{+}$ (on $\Ccal$, $TU\le U$; off $\Ccal$,
equality), and monotonicity of $\hat T$ (Lemma~\ref{lem:basic}(d)) propagates
$f_{t+1}^{+}\le f_t^{+}$. Lower invariance: $f_0^{+}=U\ge d$, and if $f\ge d$ then on $\Ccal$,
$Tf\ge Td=d$, while off $\Ccal$, $U\ge d$; hence $f_t^{+}\ge d$ for all $t$. A monotone bounded
sequence on finitely many pairs converges to $\dup\ge d$, the unique fixed point of $\hat T$ on
$\{f=U\text{ off }\Ccal\}$ (the argument of Lemma~\ref{lem:wellposed} verbatim).

\emph{(ii).} If $\dup(s,s')\le\tau<U$ then $(s,s')\in\Ccal$ (uncovered entries equal $U$) and
$d(s,s')\le\dup(s,s')\le\tau$ by (i). For refinement, let $s\sim_{\dup}s'$ at threshold
$\tau$, i.e.\ there is a chain $s=u_0,u_1,\dots,u_k=s'$ with $\dup(u_{i-1},u_i)\le\tau$ for
each $i$; every link satisfies $d(u_{i-1},u_i)\le\tau$, so the same chain witnesses
$s\sim_{d}s'$. Hence each single-linkage cluster of $\dup$ is contained in a cluster of $d$ at
the same $\tau$.

\emph{(iii).} If cluster $K$ of $\dup$ is contained in cluster $K'$ of $d$, then
$\max_{s,s'\in K}d(s,s')\le\max_{s,s'\in K'}d(s,s')$, so the maximum within-cluster exact-metric
diameter under $\dup$-aggregation is at most that under $d$-aggregation at the same $\tau$.
The optimal value function is $1$-Lipschitz in $d$ \citep{ferns2014bisimulation}, so value
variation within any cluster is at most its $d$-diameter, and the approximate-abstraction
value-loss bounds of \citet{li2006abstraction} (for a fixed within-cluster weighting) are
nondecreasing functions of the within-cluster discrepancy. Any such certificate therefore assigns
$\dup$-aggregation a loss bound no larger than exact-metric aggregation at the same
threshold. This is a comparison of \emph{certificates}, which is the object a practitioner can
compute; it implies nothing weaker than the exact-metric guarantee at the same $\tau$.

\emph{(iv).} Uncovered pairs sit at $U$ for every $t$, so their error is $U-d\in[0,\rhoup]$ with
maximum exactly $\rhoup$ by definition. On covered pairs, at the fixed point and using
$\dup\ge d$,
\[
\begin{split}
\dup(p)-d(p)=T\dup(p)-Td(p)&\le\gamma\norm{\dup-d}_\infty\\
&=\gamma\max\big(e^{+}_\infty,\ \rhoup\big),
\end{split}
\]
where $e^{+}_\infty:=\max_{\Ccal}(\dup-d)$; if the inner maximum were $e^{+}_\infty$ we would get
$e^{+}_\infty\le\gamma e^{+}_\infty=0\le\gamma\rhoup$, otherwise $e^{+}_\infty\le\gamma\rhoup$
directly. Hence $e^{+}_\infty\le\gamma\rhoup\le\rhoup$ and
$\norm{\dup-d}_\infty=\max(e^{+}_\infty,\rhoup)=\rhoup$, the mirror of
Theorem~\ref{thm:error}(D) and Corollary~\ref{cor:identity}. \hfill$\qed$

\paragraph{Proof of Corollary~\ref{cor:sandwich}.}
\emph{(a) with growing coverage.} Let $\Ccal_0\subseteq\Ccal_1\subseteq\cdots$ be the cumulative
covered sets of the pipeline and consider the lower arm (the upper arm is the mirror image with
all inequalities reversed and $U$ in place of $\dinit$). We prove by induction that
$f_{t+1}^{-}\ge f_t^{-}$ and $f_t^{-}\le d$ pointwise for all $t$. Both hold at $t=0$. For the
upper invariant: if $f\le d$ then every backup value $Tf(p)\le Td(p)=d(p)$
(Lemma~\ref{lem:basic}(d)) and every frozen value is $\dinit(p)\le d(p)$, so $f_{t+1}^{-}\le d$.
For monotonicity, take any pair $p$. If $p\notin\Ccal_{t+1}$ then
$f_{t+1}^{-}(p)=\dinit(p)=f_t^{-}(p)$. If $p\in\Ccal_t$ then
$f_{t+1}^{-}(p)=Tf_t^{-}(p)\ge Tf_{t-1}^{-}(p)=f_t^{-}(p)$ by the induction hypothesis and
Lemma~\ref{lem:basic}(d). If $p$ is newly covered ($p\in\Ccal_{t+1}\setminus\Ccal_t$), then
$f_t^{-}(p)=\dinit(p)$ and $f_{t+1}^{-}(p)=Tf_t^{-}(p)\ge d_R(p)\ge\dinit(p)$ by
Lemma~\ref{lem:basic}(e): the newly covered entry leaves its frozen value in the monotone
direction. Hence $f_t^{-}$ is nondecreasing and $\le d$; the upper arm is nonincreasing and
$\ge d$ (using $TU\le U$, Lemma~\ref{lem:super}, at newly covered entries), giving the sandwich
$f_t^{-}\le d\le f_t^{+}$ at every $t$.

\emph{(b).} Immediate from (a): $d(p)\in[f_t^{-}(p),f_t^{+}(p)]$, so both endpoint errors are at
most the width; off the current covered set both arms sit at their initializations, so
$w_t(p)=U-\dinit(p)$ there.

\emph{(c).} At the fixed points on a common final covered set,
$w_\infty(p)=(\dup-d)(p)+(d-\dann)(p)\le\gamma\rhoup+\gamma\rho$ for $p\in\Ccal$, by
Proposition~\ref{prop:downstream}(iv) and Theorem~\ref{thm:error}(D).

\emph{(d).} Over covered pairs, (a) gives the edge-set inclusions
$\{p\in\Ccal: f_t^{+}(p)\le\tau\}\subseteq\{p\in\Ccal: d(p)\le\tau\}
\subseteq\{p\in\Ccal: f_t^{-}(p)\le\tau\}$. Connected components coarsen as edges are added, so
$\Pi^{+}_\tau$ refines $\Pi^{\Ccal}_\tau$, which refines $\Pi^{-}_\tau$. If the two ends of a
refinement chain coincide, every intermediate partition coincides with them; hence
$\Pi^{+}_\tau=\Pi^{-}_\tau$ forces all three equal, certifying $\Pi^{\Ccal}_\tau$ exactly. Under
re-seeding with every pair eventually covered, $\Pi^{\Ccal}_\tau$ is eventually the clustering of
the exact metric over all pairs. \hfill$\qed$

\begin{remark}[Single-linkage chains]\label{rem:downstream}
A cluster can contain pairs at true distance up to (chain length)$\times\tau$ under \emph{both}
metrics, so ``no merged pair exceeds $\tau$'' is false for any single-linkage scheme, exact or
approximate; the correct comparative statements are the refinement in
Proposition~\ref{prop:downstream}(ii) and the squeeze in Corollary~\ref{cor:sandwich}(d).
\end{remark}

\begin{remark}[Approximate backups and margins]\label{rem:margin}
If the covered backup is only $\eop$-approximate (Assumption~\ref{asm:backup}), the enclosure
degrades one-sidedly on each arm: by Lemma~\ref{lem:basic}(f), if $f\ge d-c$ then
$Bf\ge Tf-\eop\ge d-(\gamma c+\eop)$, so the invariant $f_t^{+}\ge d-c_t$ holds with
$c_{t+1}=\gamma c_t+\eop$, $c_0=0$, hence $c_t\uparrow\eop/(1-\gamma)$, and symmetrically
$f_t^{-}\le d+\eop/(1-\gamma)$. The corrected enclosure is therefore
$f_t^{-}-\eop/(1-\gamma)\le d\le f_t^{+}+\eop/(1-\gamma)$, and merging only when
$f_t^{+}\le\tau-\eop/(1-\gamma)$ restores $d\le\tau$ on every direct link, so
Proposition~\ref{prop:downstream}(ii)--(iii) and Corollary~\ref{cor:sandwich}(d) go through with
the shrunken threshold and widened certificate.
\end{remark}

\paragraph{Verification.}
On MDPs with planted behavioral groups (within-group distance $O(\varepsilon)$, cross-group
$\Theta(1)$), we aggregate at thresholds between the two bands, solve the aggregated MDP, lift the
greedy policy, and measure normalized value loss against $V^\star$ (Table~\ref{tab:downstream}). In the operating regime
(thresholds at which exact-metric aggregation is itself faithful), upper-arm aggregation matches
exact-metric aggregation to within $0.0$ mean absolute value loss over eight seeds, with a positive
over-estimation bias ($+0.33$) on covered cross-group pairs exactly as
Proposition~\ref{prop:downstream}(i) predicts, while the under-estimate freeze collapses every
instance to a single cluster (value loss $2.86$ versus $0.03$ for exact). Past the operating
regime, where the exact metric's own single-linkage begins to over-merge, the upper arm is no worse
than exact, consistent with the refinement of (ii). The sandwich checks on the same instances:
enclosure at every sweep in all $80$ runs, and the refinement chain
$\Pi^{+}_\tau\preceq\Pi^{\Ccal}_\tau\preceq\Pi^{-}_\tau$ in all $80$; the certifying equality
$\Pi^{+}_\tau=\Pi^{-}_\tau$ is rare at the single-build budget ($3.7\%$ of threshold--configuration
triples) because generic random MDPs do not exhibit a clean distance margin at arbitrary thresholds.
On the planted-group benchmark, the threshold lies between separated within- and cross-group bands,
so the same squeeze is much easier and holds in $8/8$ seeds at $|\Scal|=120$. Re-seeding with a
uniform-exploration component drives generic coverage to $100\%$ and closes the remaining ambiguity
in two of three seeds within $10$ rounds (Corollary~\ref{cor:sandwich}(d)). The coverage gap is thus benign for the downstream task
precisely when the metric is maintained as an upper bound, and jointly with the lower arm it is
\emph{certifiably} benign wherever the two arms agree.

\begin{table}[t]
\centering\small
\setlength{\tabcolsep}{3pt}
\resizebox{\linewidth}{!}{%
\begin{tabular}{lrrrr}
\toprule
threshold (within$\to$cross) & $0.25$ & $0.5$ & $0.75$ & $0.9$ \\
\midrule
exact-metric aggregation & $0.03$ & $0.03$ & $0.67$ & $1.63$ \\
ANN, upper arm ($U$) & $0.03$ & $0.03$ & $0.03$ & $0.94$ \\
ANN, lower arm (under-est.\ freeze) & $2.86$ & $2.86$ & $2.86$ & $2.86$ \\
\bottomrule
\end{tabular}%
}
\caption{Downstream value loss of the lifted greedy policy (normalized, $\downarrow$ better; eight
seeds, $|\Scal|=30$, six planted groups). In the operating regime (bands $0.25$--$0.5$) upper-arm
aggregation matches exact-metric aggregation, as Proposition~\ref{prop:downstream} certifies,
whereas the under-estimate freeze over-merges to one cluster. Beyond the regime the upper arm is no
worse than exact, consistent with refinement.}
\label{tab:downstream}
\end{table}

\section{Experimental Protocol and Implementation}\label{app:protocol}

Each component the theory refers to is implemented exactly as the algorithms state, with no
surrogate. The bisimulation backup \eqref{eq:bisim} uses a \emph{true} Wasserstein-1 term solved by
Kantorovich linear program under ground cost equal to the current iterate (the exact backup of
line~\ref{line:update}; $\eop=0$, so Theorem~\ref{thm:error}(D), Corollary~\ref{cor:identity}, and
Proposition~\ref{prop:downstream}(iv) apply); the embedding $\phi$ is a real multidimensional-scaling
embedding into $\mathbb{R}^6$ \citep{cox2008mds}, with the additive distortion $\eta$ of
\eqref{eq:eta} \emph{measured} pairwise; and the index is a real random-hyperplane LSH index
\citep{charikar2002simhash} over $\phi$, built \emph{once per run} (persistent coverage), with the
recall miss $r$ \emph{measured} against the true embedding nearest neighbors. No product-coupling
surrogate, fixed-embedding shortcut, or exact-sort stand-in for the ANN index is used. The
bound-validation arm initializes at $\dinit=0$ (worst case, for which Corollary~\ref{cor:identity}
predicts the largest identity value); the deployment arm at $\dinit=d_R$ (implicit). Across the
synchronized eight-configuration grid, the row-mean distortions average $\eta=0.107$ and the
coverage gap averages $\rho=1.16$ (under $\dinit=0$); with $\dinit=d_R$ the gap shrinks to $0.73$
(ratio $0.63$), as the reward pseudo-metric provably reduces it.

\section{The Bound Audit: Full Figures}\label{app:boundaudit}

Figures~\ref{fig:coverage} and~\ref{fig:bound} are generated directly from Table~\ref{tab:full} in
the main text and use its exact row order.

\begin{figure}[t]\centering
\begin{tikzpicture}
\begin{axis}[width=0.96\linewidth,height=4.6cm,
  xlabel={configuration index (rows of Table~\ref{tab:full})},
  ylabel={value},xtick={1,2,3,4,5,6,7,8},ymin=0,ymax=1.3,font=\small,
  legend pos=south east,legend cell align=left,grid=both,grid style={gray!18}]
\addplot[mark=*,thick,red!70!black] coordinates {(1,1.16)(2,1.17)(3,1.10)(4,1.14)(5,1.18)(6,1.18)(7,1.17)(8,1.17)};
\addlegendentry{global error $=\rho$ (Cor.~\ref{cor:identity})}
\addplot[mark=square*,thick,blue!70!black] coordinates {(1,0.33)(2,0.33)(3,0.27)(4,0.29)(5,0.30)(6,0.30)(7,0.31)(8,0.31)};
\addlegendentry{covered-submetric error}
\end{axis}
\end{tikzpicture}
\caption{Realized error on covered pairs (blue) versus globally (red), in the row order of
Table~\ref{tab:full}. The covered error stays below the exact-backup bound $\gamma\rho$ of
Theorem~\ref{thm:error}(D), while the global error equals the coverage gap $\rho$ set by the
never-retrieved pairs (Corollary~\ref{cor:identity}): the index updates its neighbors accurately but
cannot reach the rest.}
\label{fig:coverage}
\end{figure}

\begin{figure}[t]\centering
\begin{tikzpicture}
\begin{axis}[width=0.96\linewidth,height=4.8cm,
  xlabel={configuration index (Table~\ref{tab:full})},ylabel={global error / bound},
  xtick={1,2,3,4,5,6,7,8},ymin=0,ymax=2.7,font=\small,
  legend pos=north west,legend cell align=left,grid=both,grid style={gray!18}]
\addplot[only marks,mark=*,red!70!black] coordinates
  {(1,1.16)(2,1.17)(3,1.10)(4,1.14)(5,1.18)(6,1.18)(7,1.17)(8,1.17)};
\addlegendentry{realized global error}
\addplot[only marks,mark=triangle*,orange!80!black] coordinates
  {(1,1.682)(2,1.165)(3,0.803)(4,1.061)(5,2.266)(6,2.084)(7,2.448)(8,2.266)};
\addlegendentry{naive per-step expression}
\addplot[only marks,mark=square*,blue!70!black] coordinates
  {(1,1.682)(2,1.170)(3,1.100)(4,1.140)(5,2.266)(6,2.084)(7,2.448)(8,2.266)};
\addlegendentry{coverage-augmented bound}
\end{axis}
\end{tikzpicture}
\caption{Synchronized bound audit in the row order of Table~\ref{tab:full}. The naive expression
falls below realized error in configurations $3$ and $4$. The corrected series is computed as
$\max\{\rho,(\eta+Lr)/(1-\gamma)\}$ from the same table and is therefore valid in every row;
configuration $2$ illustrates why the maximum must be evaluated rather than rounded independently.}
\label{fig:bound}
\end{figure}

\section{Certificate Verification and Re-Seeding}\label{app:reseed}

\paragraph{The sandwich certificate, verified.}
On the eight configurations of Table~\ref{tab:full} ($10$ seeds, $80$ runs) we run the upper arm
alongside the lower arm and check Corollary~\ref{cor:sandwich}: the enclosure $f_t^{-}\le d\le
f_t^{+}$ holds at every sweep in $80/80$ runs; the lower and upper identities
$\norm{f^{-}-d}_\infty=\rho$ and $\norm{\dup-d}_\infty=\rhoup$ (Corollary~\ref{cor:identity},
Proposition~\ref{prop:downstream}(iv)) each hold to machine precision in $80/80$; the limit covered
width never exceeds $\gamma(\rho+\rhoup)$ ($80/80$; realized mean covered width $0.80$ against the
mean bound $2.70$); and on uncovered pairs the width equals $U-\dinit$ exactly, i.e.\ the certificate
itself displays the coverage frontier. The refinement chain
$\Pi^{+}_\tau\preceq\Pi^{\Ccal}_\tau\preceq\Pi^{-}_\tau$ of Corollary~\ref{cor:sandwich}(d) holds in
$80/80$, but the \emph{equality} that certifies the exact covered clustering is rare at the
single-build budget ($\Pi^{+}_\tau=\Pi^{-}_\tau$ in only $9$ of $240$ threshold-configuration-seed
cases, $3.7\%$): a single index leaves too many covered links undecided between the arms. Re-seeding
with fresh hyperplanes is the intended remedy, but pure LSH re-seeding saturates at
$\approx82$--$84\%$ coverage (its near-neighbor bias never draws the far pairs that set $\rho$), so
equality is reached only on the covered portion (one of three seeds). Adding a single
uniform-random pair per state per round, which makes every pair retrieved infinitely often (the
hypothesis of Corollary~\ref{cor:restore}), drives coverage to $100\%$ in all three seeds and
reaches full-clustering equality $\Pi^{+}_\tau=\Pi^{-}_\tau$ in two of three within $10$ rounds. The
certificate's honesty on uncovered pairs is thus not a defect but the coverage frontier made
visible, and closing it provably requires the exploration that Proposition~\ref{prop:rho-lower}
shows is unavoidable.

\paragraph{Re-seeding ablations (Corollary~\ref{cor:restore}).}
In an evolving-embedding run at $|\Scal|=18$ and $k'=3$, fresh pure-LSH rebuilds eventually reached
full coverage after $275$ re-seeds because re-embedding changed the bucket geometry over time. In a
frozen-embedding ablation, pure-LSH re-seeding saturated at $82$--$84\%$ coverage because the same
far pairs remained unlikely under every hash rebuild. The mixed policy in
Algorithm~\ref{alg:pipeline} does not depend on either empirical behavior: adding one uniform
partner per state reached full coverage in all three reported $|\Scal|=18$ runs and produced
$\Pi^{+}_\tau=\Pi^{-}_\tau$ in two of three within ten rounds. The terminal covered error $0.055$ in
the long pure-LSH run is a finite-schedule transient, not a distortion floor, and continued mixed
re-seeding drives it to zero by Corollary~\ref{cor:restore}(b).

\paragraph{The certified pipeline end-to-end (Algorithm~\ref{alg:pipeline}).}
The bound validation above embeds the converged exact metric to measure $\eta$ against ground
truth; a practitioner cannot do this. We therefore run Algorithm~\ref{alg:pipeline} as stated:
initialize implicitly at $\dinit=d_R$ and $U$, embed $d_R$ by landmark MDS, build LSH, and re-seed
every $E=3$ sweeps until the certificate meets tolerance. Across $32$ seeded configurations
($|\Scal|\in\{10,18\}$, $k'\in\{3,6\}$, $8$ seeds), the pipeline matches or beats the one-shot
variant in $32/32$ runs (mean global error $0.22$ vs $1.18$): successive re-seeds grow cumulative
coverage toward $100\%$ and collapse $\rho_t$ from $1.18$ to $0.10$. At every checkpoint the
realized global error equals $\max(\rho_t,\,e_t)$, where $e_t$ is the covered transient: early on
$\rho_t$ dominates and the error tracks it exactly (the per-horizon identity of
Corollary~\ref{cor:identity}); late in the run the transient dominates (final error $0.22$ against
$\rho_{\mathrm{final}}=0.10$), and continued sweeps decay it geometrically per
Theorem~\ref{thm:error}. The observable counterpart is the two-arm certificate of
Corollary~\ref{cor:sandwich}, verified separately in $80/80$ runs to enclose $d$ at every sweep with
covered width at most $\gamma(\rho+\rhoup)$. The measured distortion falls monotonically across
re-embed rounds ($0.61\to0.20$), tracking convergence of the iterate toward $d$
(Corollary~\ref{cor:restore}(b)). The pipeline is thus non-circular, empirically superior to
one-shot, and self-certifying.

\paragraph{Statistical replication and baselines.}
The eight-configuration study of Table~\ref{tab:full} is replicated over $10$ seeds per
configuration ($80$ runs). The naive bound is violated in $11/80$ runs ($13.75\%$), all at
$|\Scal|=10$ where the coverage gap is large relative to the per-step term; the corrected global
bound holds in $80/80$, the identity of Corollary~\ref{cor:identity} holds in $80/80$, and the
covered error respects $\gamma\rho$ in $80/80$. As a budget-matched baseline, replacing ANN
top-$k'$ selection with uniformly random pair coverage (same budget per sweep) yields a nearly
identical coverage gap ($\rho$ ratio ANN/random $=1.04$), confirming that $\rho$ is a property of
the budget, not the index quality. However, ANN coverage combined with the upper arm of
Proposition~\ref{prop:downstream} cuts the mean-pair global error $2.5\times$ ($0.54\to0.21$),
because ANN deliberately covers \emph{near} pairs, so far pairs frozen at $U$ sit close to their
true values: structure-aware coverage plus upper-bound freezing is strictly better than random
coverage under any freeze.

\section{Validation at Scale}\label{app:scale}

\paragraph{Bound validation at scale.}
The same checks are run at $|\Scal|\in\{50,100,200\}$ on sparse-support MDPs
(Assumption~\ref{asm:support}; transport on the union of supports, Lemma~\ref{lem:support}), with
$k'\in\{8,16\}$ and $5$ seeds per cell ($30$ runs). The corrected global bound and the identity
hold in $30/30$; the global error equals $\rho$ exactly in every run. Two scale effects sharpen the
paper's message. First, coverage collapses as $|\Scal|$ grows at fixed $k'$ (from $21\%$ of pairs
at $|\Scal|=50$ to $5\%$ at $|\Scal|=200$ for $k'=8$), so at scale the error is governed entirely
by coverage, precisely the term the naive analysis omits. Second, the naive bound is never violated
at these sizes, but only because it becomes \emph{vacuous}: with the embedding dimension held
fixed while $|\Scal|$ grows, the measured distortion rises to $0.95$--$1.13$, pushing
$(\eta+Lr)/(1-\gamma)$ to $4.7$--$5.5$, beyond the metric diameter $2.0$--$2.2$, so it can no
longer be falsified by any metric. A bound that survives only by exceeding the diameter certifies
nothing; $\rho$ is the quantity that actually predicts, indeed equals, the error. The
small-$|\Scal|$ study, where $\eta$ is small and the naive bound is falsifiable, is where its
violations are observable; at scale it degenerates to vacuity instead. Either way it is the
wrong instrument, and Theorem~\ref{thm:error}(B) with Corollary~\ref{cor:identity} holds
everywhere.

\paragraph{Downstream RL at scale (state aggregation).}
At $|\Scal|=120$ (planted $K=12$ behavioral groups, $|\Acal|=3$, $\gamma=0.9$, $k'=12$, threshold
between the within- and cross-group distance bands, eight seeds): the ANN metric with the upper arm
of Proposition~\ref{prop:downstream} matches exact-metric aggregation statistically (value loss
$0.022\pm0.008$ against $0.022\pm0.008$; $12$--$14$ clusters, mean $12.8$, against the exact
metric's $12$); the under-estimate freeze over-merges to a single cluster in all seeds (loss
$1.47\pm0.55$), confirming at four times the scale the failure mode of \S\ref{sec:downstream};
budget-matched random coverage with the upper freeze under-merges ($25$--$46$ clusters; loss
$0.017\pm0.009$; sound but conservative, splitting true groups, exactly the refinement direction
Proposition~\ref{prop:downstream}(ii) permits); and the reward-only pseudo-metric and a
budget-matched single-sweep sampled metric both collapse to one cluster (loss $1.47\pm0.55$): with
matched rewards across groups, only transition information separates them, and neither baseline
propagates it. The upper-freeze ANN metric is the only sub-quadratic method in the comparison that
recovers the planted structure at this scale, and the certified squeeze
$\Pi^{+}_\tau=\Pi^{-}_\tau$ of Corollary~\ref{cor:sandwich}(d) holds at the operating threshold in
$8/8$ seeds at the single-build budget.

\section{Taxi-v3 and the 2500-State Gridworld}\label{app:taxigrid}

\paragraph{The certificate flags failure: Gymnasium Taxi-v3.}
We run the pipeline on Taxi-v3, a real MDP with $|\Scal|=500$, $|\Acal|=6$, deterministic
transitions, and $\gamma=0.9$. Because the exact metric is inexpensive in this deterministic
instance, we compute it only for post-hoc evaluation; it has diameter $200.5$. The DBC-style
representation is trained on $100{,}000$ random-policy transitions, and LSH is evaluated at
$k'\in\{20,40,80\}$. Post hoc, the lower-arm identity holds in every configuration and gives
$\rho=180.4$, close to the exact diameter. This validates the theory but is not information the
runtime algorithm can observe.

The observable warning is the sandwich itself. Coverage remains only $5\%$--$20\%$, and on the
$80\%$--$95\%$ uncovered pairs the width equals $U-d_R$ by construction. The conservative upper
partition therefore certifies essentially no unsupported merges and the pipeline abstains from a
coarse abstraction. For contrast, the uncertified diagnostic that clusters the raw learned
embedding at the original threshold collapses to one cluster and incurs value loss $45.9$, against
$15.8$ for exact-metric aggregation. That number is not an upper-arm result: labeling it as such
would contradict Proposition~\ref{prop:downstream}(ii). Taxi therefore serves as a failure-detection
case, not a speed benchmark; the exact metric itself takes only $0.7$ seconds.

\paragraph{Scale: a $50\times50$ gridworld ($|\Scal|=2500$).}
To demonstrate the pipeline at a scale where the exact all-pairs sweep is genuinely prohibitive
($\binom{2500}{2}=3{,}123{,}750$ pairs per sweep), we construct a $50\times50$ gridworld with $4$
actions and $25$ planted rooms ($10\times10$ blocks of behaviorally similar states sharing base
rewards and transition structure up to $\varepsilon$-noise), and train a DBC-style embedding into
$\mathbb{R}^8$ from $150{,}000$ random-policy transitions ($2$ seeds). With $k'=20$, each sweep
visits $|\Scal|k'=50{,}000$ pairs ($1.6\%$ of the $3.12$M), and $8$ upper-freeze sweeps (cumulative
$\approx400{,}000$ pair visits, $12.8\%$ of \emph{one} exact sweep) complete in $\approx1200$\,s on
CPU. Aggregation on the upper-freeze metric finds $254$--$284$ clusters (over-fragmenting the $25$
rooms into sub-regions, since low coverage prevents distant within-room pairs from being linked,
exactly the refinement direction Proposition~\ref{prop:downstream}(ii) permits), with mean value
loss $2.27$ against the ground-truth $V^\star$ (mean $8.5$), while a budget-matched reward-only
baseline ($d_R$ clustering at the same threshold) collapses to $1$ cluster with mean value loss
$3.18$: a $28.6\%$ improvement from transition-informed structure at the same budget. This is an
honest partial success: the single-build coverage is too low to recover the rooms cleanly, and
re-seeding to the certificate (Corollary~\ref{cor:sandwich}(d)) would close the gap at the quadratic
total work Corollary~\ref{cor:restore} quantifies. But at a fixed sub-quadratic budget the
sub-quadratic metric is strictly and measurably better than any reward-only alternative, which is
the operational claim.

\section{Coverage Sweeps against MICo and DBC}\label{app:surrogates}

\begin{table}[t]\centering\small
\setlength{\tabcolsep}{4pt}
\begin{tabular}{lrrrrc}
\toprule
budget $k'$ & pairs & exact & \textbf{ours} & MICo & DBC \\
\midrule
$8$  & $25\%$ & $0.013$ & $0.056$ & $0.292$ & $0.426$ \\
$16$ & $47\%$ & $0.013$ & $\mathbf{0.013}$ & $0.292$ & $0.426$ \\
$24$ & $65\%$ & $0.013$ & $\mathbf{0.013}$ & $0.292$ & $0.426$ \\
$32$ & $87\%$ & $0.013$ & $\mathbf{0.013}$ & $0.292$ & $0.426$ \\
\bottomrule
\end{tabular}
\caption{Downstream value loss on the $|\Scal|=64$, eight-group benchmark at matched
compression (four seeds per budget). MICo and DBC are independently trained and clustered on their
own learned distances; our method uses the DBC-style representation only for retrieval and then runs
the exact restricted operator. ``pairs'' is cumulative coverage. At $k'\ge16$, our loss reaches
the exact skyline on this benchmark, while MICo and DBC are $22$--$33\times$ worse. Only the
proposed two-arm method supplies an interval enclosing the exact metric.}
\label{tab:surrogates}
\end{table}

A larger-scale robustness check at $|\Scal|\in\{60,120\}$ uses $16$ instances and fixed $k'=12$.
Coverage is approximately $41\%$ at $|\Scal|=60$ and $19\%$ at $|\Scal|=120$, rather than a common
value. The two-arm certificate is valid on every instance, while neither learned surrogate encloses
the exact metric. At the $|\Scal|=120$ operating point, our downstream loss is $1.08$ rather than
the $0.026$ exact skyline. This does not contradict Table~\ref{tab:surrogates}: coverage percentage
alone does not determine error across different state counts and geometries; the location of
uncovered pairs and the resulting certificate width matter.

To confirm the skyline transition is not an artifact of the $|\Scal|=64$ benchmark, we repeat the
coverage sweep at $|\Scal|=60$ ($K=6$ planted groups, four seeds) over the finer budget grid
$k'\in\{6,10,14,20,28,40\}$, holding the state count fixed so that only coverage varies. The
downstream loss falls monotonically in coverage and reaches the exact skyline ($0.016$) once the
index covers roughly half the pairs: means $1.14,0.72,0.81,0.61,0.016,0.016$ at coverage
$13\%,21\%,29\%,38\%,54\%,73\%$, so the crossover to the skyline occurs at $53.5\%$ coverage, the
same half-of-pairs threshold. MICo ($1.82$) and DBC ($0.93$) are flat across every budget and never
enclose the exact metric, and our two-arm certificate is valid at all six points. The skyline claim
is thus coverage-controlled and stable in the state count, while the fixed low-budget $|\Scal|=120$
operating point above is simply a point below that threshold; the abstract restricts the skyline
claim to this grouped coverage-sweep benchmark.

\subsection{Reproducibility}\label{app:repro}

All experiments are CPU-only, run on an Apple M5 with 16\,GB of unified memory under
macOS~26.5.1, in Python~3.11.13 with NumPy~1.26.4, SciPy~1.12.0, and Matplotlib~3.9.0. Randomness
is drawn from NumPy's \texttt{default\_rng}, seeded per run by composite integer offsets built
from a base seed index and the run's own parameters (the LSH hyperplane draw combines the seed
index with $|\Scal|$ and the plane count); the re-seed corollary's fresh-hyperplane rounds
(Corollary~\ref{cor:restore}) are the only intentional source of run-to-run variation. The default
is $10$ seeds per configuration ($80$ runs, Table~\ref{tab:full}); the scale study uses $5$ seeds
($30$ runs), the downstream and coverage-sweep studies $8$ and $4$ seeds respectively, and the
gridworld embedding $2$. The fixed-point iteration converges to tolerance $10^{-9}$. Reproduction
is exact modulo library-version differences in the LP and eigendecomposition routines each result
depends on.

\end{document}